\theoremstyle{plain}
\newtheorem{theorem}{Theorem}[section]
\newtheorem{proposition}[theorem]{Proposition}
\newtheorem{lemma}[theorem]{Lemma}
\theoremstyle{definition}
\newtheorem{definition}[theorem]{Definition}
\theoremstyle{remark}
\newtheorem{remark}[theorem]{Remark}
\icmltitlerunning{Split-and-Denoise: Protect Large Language Model Inference with Local Differential Privacy}
\begin{document}

\twocolumn[
\icmltitle{Split-and-Denoise: Protect Large Language Model Inference with Local Differential Privacy}



\icmlsetsymbol{equal}{*}

\begin{icmlauthorlist}
\icmlauthor{Peihua Mai}{equal,yyy}
\icmlauthor{Ran Yan}{equal,yyy}
\icmlauthor{Zhe Huang}{zzz}
\icmlauthor{Youjia Yang}{aaa}
\icmlauthor{Yan Pang}{yyy}
\end{icmlauthorlist}

\icmlaffiliation{yyy}{National University of Singapore}
\icmlaffiliation{zzz}{North China Electric Power University}
\icmlaffiliation{aaa}{University of South California}

\icmlcorrespondingauthor{Yan Pang}{jamespang@nus.edu.sg}

\icmlkeywords{Differential privacy, large language model}

\vskip 0.3in
]



\printAffiliationsAndNotice{\icmlEqualContribution} 

\begin{abstract}
Large Language Models (LLMs) excel in natural language understanding by capturing hidden semantics in vector space. This process enriches the value of text embeddings for various downstream tasks, thereby fostering the Embedding-as-a-Service (EaaS) business model. However, the risk of privacy leakage due to direct text transmission to servers remains a critical concern. To address this, we introduce Split-N-Denoise (SnD), an private inference framework that splits the model to execute the token embedding layer on the client side at minimal computational cost. This allows the client to introduce noise prior to transmitting the embeddings to the server, and subsequently receive and denoise the perturbed output embeddings for downstream tasks. Our approach is designed for the inference stage of LLMs and requires no modifications to the model parameters. Extensive experiments demonstrate SnD's effectiveness in optimizing the privacy-utility tradeoff across various LLM architectures and diverse downstream tasks. The results reveal an improvement in performance under the same privacy budget compared to the baselines by over 10\% on average, offering clients a privacy-preserving solution for local privacy protection.
\end{abstract}

\section{Introduction}

Large Language Models (LLMs) have shown powerful capability in natural language understanding by capturing hidden semantics in vector space. Consequently, users can leverage LLMs to obtain embeddings and subsequently apply them to their own downstream tasks, known as "embedding as a service" (EaaS). However, EaaS is typically provided as an online service, giving rise to significant privacy concerns. In particular, users may input sensitive information, such as names, phones, and email addresses, that needs to be kept hidden from the service provider. With the growing concern around the potential leakage of confidential data, certain companies, such as Samsung, have temporally prohibited the usage of online LLM services.

Recent research on privacy-preserving model inference investigates around two directions, cryptographic \cite{liu2023llms, chen2022x} and perturbation \cite{du2023dp}. Cryptography typically employs homomorphic encryption (HE) to compute the inference result of the users' encrypted input. Unfortunately, the application of cryptographic technique is constrained by the significant computation overhead of cryptographic operations, especially on large transformer models. Perturbation provides differential privacy (DP) guarantee by adding calibrated noise to the original data. A key challenge of this approach is how to balance the utility and privacy tradeoff in a local differential privacy (LDP) setting, where users' inputs are privatized before being released to the server. Furthermore, privatization on text data is particularly difficult when the randomized algorithm is required to map text input to text output.

Split learning \cite{gupta2018distributed, vepakomma2018split} has emerged as a solution to privacy-preserving computation between two parties. During inference, the user performs affordable computation locally to obtain intermediate results (IRs), and forwards them to the service provider for subsequent operations. To mitigate privacy leakage, recent research has integrate DP with split learning by injecting noises into the IRs before sharing with the server \cite{yang2022differentially}. In the split inference setting, a crucial problem is to design an algorithm that minimizes the impact on model performance while ensuring LDP.

A notable approach involves the application of denoising techniques to conduct error correction and enhance model utility. Existing studies incorporate denoising layers on the server side, leveraging the post-processing properties of DP \cite{nasr2020improving, wang2019dplssgd, xu2022denoising}.  However, the effectiveness of denoising is hindered by the fact that the server is ignorant of the injected noise levels. Driven by the limitation, a question arises: \textit{can we improve the utility by conducting denoising on the user side, leveraging the knowledge of noise levels and raw IRs?} This is a nontrivial task to uncover the closed-form mapping between denoised embedding and noises as well as raw IRs since the inputs have undergone a series of complex transformations.

In this paper, we answer this question affirmatively by proposing Split-N-Denoise (SnD), a framework that integrates split inference and denoising techniques to enhance utility under LDP bound. To minimize computational overhead of users, we deploy only the token representation layer on the client sides. A denoise model that enhances noisy embeddings using raw inputs and noise levels is pre-trained on the server side and subsequently shared with the user. Once receiving the output from server, users input their private data into the denoise model to improve the utility of embeddings. The implementation is available at https://github.com/NusIoraPrivacy/eaas-privacy.

Our main contributions involve the following:

\begin{itemize}

\item We propose SnD, a framework that integrates split inference and denoising techniques to protect user's privacy during LLM inference with strong privacy guarantee. Empirical studies demonstrate that our method outperforms existing DP-based baselines by more than 10\% on average and maintains utility even in extremely low privacy budget settings ($\eta\leq 0.01$).

\item We design a innovative denoising method deployed on user side. In this approach, a denoise model is pre-trained on server side using public dataset and synthetic noises. Subsequently, this trained model is deployed on the user side, where it leverages the specific noise levels and raw IRs provided by the user to enhance the embeddings.
\end{itemize}

\section{Prior Works}

\paragraph{Local Privacy Protection for LLMs} 
With the advent of LLMs, privacy leakage has emerged as a crucial concern. Existing literature predominantly focuses on privacy protection throughout the entire training process, encompassing pre-training \cite{inproceedings, pmlr-v235-ding24g}, fine-tuning~\cite{huang2020texthide,kerrigan2020differentially,yu2021differentially,lukas2023analyzing, shen2023split, ye2024openfedllm}, and prompt-tuning phases~\cite{duan2023flocks,li2023privacypreserving}. Yet, there is a notable dearth of research that addresses local privacy during the inference phase with a fully frozen LLM. This scenario, which prohibits alterations to the model's structure and parameters, is particularly complex. Nonetheless, it holds significance in black-box API access contexts, especially for proprietary models like GPT-4. An intuitive approach involves anonymizing sensitive terms prior to LLM input and subsequently restoring them post-output~\cite{kan2023protecting,chen2023hide}. However, this method, while effective for obfuscating specific entities, falls short in concealing other linguistic elements, including verbs and non-named entities. Such a limitation compromises full privacy and is unsuitable for tasks necessitating exact semantic interpretation of the altered entities, such as knowledge retrieval and text continuation~\cite{chen2023hide}. An alternative strategy might entail privatizing the input at token representations or intermediate layer levels. \citet{qu2021natural} investigates the utility and privacy tradeoff for privacy-preserving finetuing, involving text-to-text privatization ~\cite{feyisetan2019privacy,Qu_2021} and token embedding privatizations, while the two techniques could be adapted to private LLM inference. Privacy-Preserving Prompt Tuning (RAPT)~\cite{li2023privacypreserving} employs text-text privatization to conduct prompt tuning and inference with local differential privacy. The authors propose a reconstruction head during prompt tuning to enhance the utility. Another direction employs homomorphic encyption (HE) to conduct private transformer inference such as Privacy-Computing Friendly Transformers (PCFT) and The-x \cite{liu2023llms, chen2022x}, but the significant overhead renders it impractical for implementation in LLM.

\paragraph{Privacy-Preserving Split Learning}
Split learning is a privacy-preserving approach in distributed learning, where each client trains a segment of a deep network up to a designated "cut layer." The outputs at this layer are then forwarded to the server side, which completes the training without accessing the client's raw data. This approach facilitates forward and backward propagation without sharing raw data, ensuring the client-side local privacy ~\cite{gupta2018distributed,vepakomma2018split}.
Vepakomma et al. shows that split learning surpasses federated learning and large batch synchronous SGD in achieving superior accuracy with significantly reduced client-side computational demands~\cite{gupta2018distributed}. Singh et al. further validate its efficacy across broader experimental contexts, demonstrating that an increase in the number of clients or model dimensions gives split learning an edge over federated learning~\cite{singh2019detailed}. Ding et al. applies the idea of split learning to the computer vision task~\cite{ding2022efficient}. The advantage in its computational efficiency renders it suitable for LLM local privacy setting, where the client side executes minimal computational tasks, such as noising and denoising operations at specific segmented layers, to ensure privacy at reduced computational expenses. Meanwhile, the server handles the bulk of the model's layers. Our research serves as an initial endeavor to integrate split learning with LLM privacy concerns.

\paragraph{Denoising for Differential Privacy (DP)}

While elevated noise levels offer robust privacy protections, privacy-preserving methods inevitably compromise the model's quality~\cite{wang2019dplssgd}. A notable approach involves the application of denoising techniques specifically tailored for Differential Privacy (DP), incorporating a post-processing layer to enhance DP utility. Pioneering research in statistical estimation underscores the efficacy of post-processing denoising in achieving accurate private network degree distribution estimates~\cite{hay2009accurate}, and in reducing linear regression estimation errors when the ground truth is sparse~\cite{Nikolov_2013}. Balle et al. demonstrated that denoising significantly enhances the Gaussian mechanism's accuracy in high-dimensional settings for DP algorithms with output perturbations ~\cite{balle2018improving}. More recently, denoising mechanisms have been extended to the training of Machine Learning (ML) models, particularly Deep Neural Networks (DNNs), by applying denoising techniques to Gaussian noise-injected gradients, thereby improving the utility of privately trained ML models~\cite{wang2019dplssgd}. Nasr, Shokri, and Houmansadr further explored the use of scaling as a denoising strategy to optimize DP utility in Differential Privacy Stochastic Gradient Descent (DP-SGD), scaling the noisy gradients based on their usefulness~\cite{nasr2020improving}. Subsequently, Xu et al. employed scaling and masking as post-processing denoising techniques on top of Gaussian noise-injected intermediate results in split learning, aiming to reduce the noisy neural network output's estimation error without compromising privacy~\cite{xu2022denoising}.

\section{Methodology}
\label{methodology}
\subsection{Preliminaries}
\subsubsection{LDP}
Differential privacy (DP) \cite{dwork2006differential, dwork2014algorithmic} is considered the gold standard for data privacy. Its definition is as follows:

\begin{definition}[(\(\epsilon, \delta\))-Differential Privacy]
A randomized mechanism \(M\) with domain \(D\) and range \(R\) preserves \((\epsilon, \delta)\)-differential privacy if and only if for any two neighboring datasets \(D, D' \in D\) and for any subset \(S \subseteq R\), the following inequality holds:
\[
\Pr[M(D) \in S] \leq e^{\epsilon} \Pr[M(D') \in S] + \delta
\]
where \(\epsilon\) is the privacy budget and \(\delta\) is the failure probability.
\end{definition}

Local differential privacy (LDP) is a particular case of DP, where the server is not trusted and data privatization is conducted by the client. For any inputs $x$, $x'$ $\in D$, LDP  requires a randomized mechanism $M$ to satisfy:
\begin{equation}
    \Pr[M(x) \in S] \leq e^{\epsilon} \Pr[M(x') \in S] + \delta
\end{equation}
for any measurable subset subset $S \subseteq Range(M)$.

\subsubsection{\(d_\chi\)-privacy} 
In the context of local privacy preservation, we employ \(d_\chi\)-privacy ~\cite{Chatzikokolakis2013}, a specialized variant of local differential privacy tailored for textual data ~\cite{feyisetan2019privacy, Qu_2021}. \(d_\chi\)-privacy allows to impose high probability of observing the same output for inputs with similar semantics. We state the formal definition in the following:

\begin{definition} [\(d_\chi\)-privacy]
For an input domain \(X\) and an output domain \(Y\), \(d_\chi\) serves as a metric space over \(X\). A stochastic mechanism \(M: X \rightarrow Y\) is said to adhere to \(\eta d_\chi\)-privacy if, for any two elements \(x, x' \in X\), the output distributions \(M(x)\) and \(M(x')\) satisfy the following inequality:
\[
\frac{P(M(x) = y)}{P(M(x') = y)} \leq e^{\eta d_\chi(x, x')}, \quad \forall y \in Y,
\]
where \(\eta \geq 0\) is a tunable privacy parameter that modulates the level of privacy protection.
\end{definition}

The privacy guarantee indicates that the log-likelihood ratio of producing the same outcome $y$ is bounded by $\eta d_\chi(x, x')$ for any two possible inputs $x$, $x'$.

\subsection{Architecture}
\begin{figure*}[htbp]
	\centering
	\includegraphics[width=6 in]{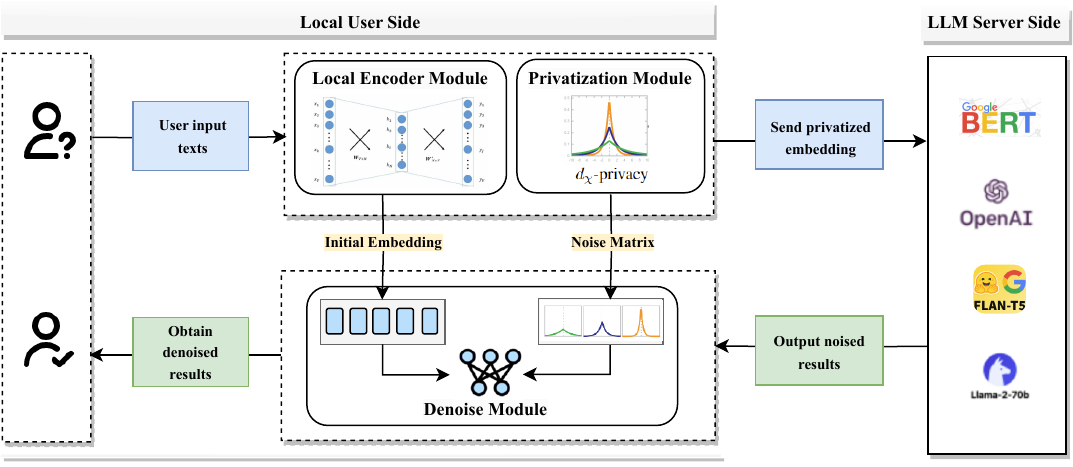}
 	\caption{Overview of our privacy-preserving SnD framework. Users first obtain an initial embedding from a local encoder, followed by a noise addition via the privatization module. This privatized embedding is then transmitted to the server for processing. Upon completion, users receive a noised output, which is subsequently refined using a pre-trained denoising model to achieve an optimal balance between privacy and utility.}
	\label{framework}
\end{figure*}

Denote $G: \mathcal{V}^n \rightarrow \mathbb{R}^d$ as the language model that maps $n$-token to embedding. In Split-N-Denoise (SnD), we split the language model $G$ into a local encoder $G_l: \mathcal{V}^n \rightarrow \mathbb{R}^{n\times d}$ at user side and a cloud encoder $G_c: \mathbb{R}^{n\times d} \rightarrow \mathbb{R}^d$ at server side. The local encoder consists of only the token representation layer to minimize the computation cost for user, and the server performs subsequent operations on the IRs uploaded by the clients. The architecture of SnD is depicted in Figure \ref{framework}, containing four main components:
\begin{itemize}
    \item \textit{Local encoder module}: the user retrieves the token embeddings of their input locally.
    \item \textit{Privatization module}: the token representations are privatized by the user before being transmitted to the server to satisfy LDP.
    \item \textit{Cloud encoder module}: the server performs transformation on the privatized token representations and returns the embedding to user.
    \item \textit{Denoise module}: user conducts local denoising on the received embedding leveraging their raw inputs and specific noise levels.
\end{itemize}

\subsection{Noise Mechanism}
We adopt \(d_\chi\)-privacy to privatize the token representation layers on user side. Given an input sequence \(x = [x_1, \ldots, x_n]\), the token representation layer transforms \(x\) into a vector sequence \(X=[\boldsymbol{x}_1, \ldots, \boldsymbol{x}_n] \in \mathbb{R}^{n\times d}\) via embedding model \(E \in \mathbb{R}^{|\mathcal{V}| \times d}\), where \(|\mathcal{V}|\) denotes the vocabulary size and \(d\) represents the dimensionality of the embeddings. 

Assuming \(L_2\) norm as the distance metric, the application of \(d_X\) privacy, parameterized by \(\eta\), to a given word embedding \(\boldsymbol{x}_t \in \mathbb{R}^d\) is realized by the addition of Laplacian noise \(z \sim c\exp(-\eta ||z||)\), where $c$ is a real-valued constant ~\cite{wu2017bolton}. To sample $z$ from the Laplacian distribution, consider \(z = l\boldsymbol{v}\), where \(l\) is sampled from a Gamma distribution \(\Gamma(d, 1/\eta)\) and \(\boldsymbol{v}\) is uniformly sampled from the unit ball \(B^d\). Consequently, the privatized representation \(M(\boldsymbol{x}_t)\) can be succinctly expressed as:
\begin{equation}
\label{eq:noise}
M(\boldsymbol{x}_t) = \boldsymbol{x}_t + \boldsymbol{z}.
\end{equation}

The supports for $z$ and thus $M(\boldsymbol{x}_t)$ are unbounded, imposing difficulties on subsequent denoise procedures, especially under low level of $\eta$. To improve the performance of denoise model introduced in Section \ref{sec:denoise}, the client clips the $l_2$ norm of the privatized representation within $C_{x_t}$:
\begin{equation}
    M' (\boldsymbol{x}_t) = M (\boldsymbol{x}_t) \cdot \min \left(1, C_{x_t}/\|M(\boldsymbol{x}_t)\|\right)
\end{equation}
, where $C_{x_t}=\max_{\boldsymbol{x}_t \in \mathcal{X}_t}\|\boldsymbol{x}_t\|$ is chosen to be the upper bound of $\boldsymbol{x}_t$. The user then updates its noise matrix locally according to the clipped representations for subsequent denoise. Appendix \ref{app:ablation} demonstrates the benefits of norm clipping empirically.

The following theorem states that the noise mechanism $M': \mathbb{R}^d \rightarrow \mathbb{R}^d$ adheres to $\eta d_\chi-$privacy. Refer to Appendix \ref{app:etadp} for the proof.

\begin{theorem}
\label{thm:etadp}
For any $d\geq1$ and any $\eta>0$, the mechanism $M': \mathbb{R}^d \rightarrow \mathbb{R}^d$ achieves $\eta d_\chi-$privacy with respect to $d_\chi(\boldsymbol{x}, \boldsymbol{x}') = \|\boldsymbol{x}-\boldsymbol{x}'\|$.
\end{theorem}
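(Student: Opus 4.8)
The plan is to decompose $M'$ as the unclipped Laplacian mechanism $M$ followed by a fixed, input-independent clipping map, and then invoke the post-processing invariance of $d_\chi$-privacy. First I would establish the guarantee for the unclipped mechanism $M(\boldsymbol{x}) = \boldsymbol{x} + \boldsymbol{z}$ with $\boldsymbol{z} \sim c\exp(-\eta\|\boldsymbol{z}\|)$. Because the noise law is translation-invariant, the output density of $M(\boldsymbol{x})$ evaluated at $\boldsymbol{y}$ is $p(\boldsymbol{y}\mid\boldsymbol{x}) = c\exp(-\eta\|\boldsymbol{y}-\boldsymbol{x}\|)$ with the \emph{same} normalizing constant $c = c(d,\eta)$ for every center $\boldsymbol{x}$. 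Taking the ratio for two inputs $\boldsymbol{x},\boldsymbol{x}'$ and applying the reverse triangle inequality $\|\boldsymbol{y}-\boldsymbol{x}'\| - \|\boldsymbol{y}-\boldsymbol{x}\| \le \|\boldsymbol{x}-\boldsymbol{x}'\|$ yields
\[
\frac{p(\boldsymbol{y}\mid\boldsymbol{x})}{p(\boldsymbol{y}\mid\boldsymbol{x}')} = \exp\!\big(\eta(\|\boldsymbol{y}-\boldsymbol{x}'\| - \|\boldsymbol{y}-\boldsymbol{x}\|)\big) \le e^{\eta\|\boldsymbol{x}-\boldsymbol{x}'\|},
\]
so $M$ satisfies $\eta d_\chi$-privacy; integrating this pointwise bound over any measurable $S$ gives the set form $\Pr[M(\boldsymbol{x})\in S] \le e^{\eta\|\boldsymbol{x}-\boldsymbol{x}'\|}\Pr[M(\boldsymbol{x}')\in S]$.

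Next I would observe that $M' = f \circ M$, where $f(\boldsymbol{u}) = \boldsymbol{u}\cdot\min(1, C_{x_t}/\|\boldsymbol{u}\|)$ is the clipping operation. The crucial point is that the clipping radius $C_{x_t} = \max_{\boldsymbol{x}_t\in\mathcal{X}_t}\|\boldsymbol{x}_t\|$ is a global constant determined by the embedding table, not by the particular private input; hence $f$ is a fixed deterministic map applied identically regardless of which $\boldsymbol{x}$ or $\boldsymbol{x}'$ is in play. By post-processing, for any measurable $S$ in the range of $f$ we have $\Pr[M'(\boldsymbol{x})\in S] = \Pr[M(\boldsymbol{x})\in f^{-1}(S)]$, and applying the set-form bound from the previous step to the preimage $f^{-1}(S)$ immediately gives $\Pr[M'(\boldsymbol{x})\in S] \le e^{\eta\|\boldsymbol{x}-\boldsymbol{x}'\|}\Pr[M'(\boldsymbol{x}')\in S]$, which is exactly $\eta d_\chi$-privacy for $M'$.

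I expect the main obstacle to be the measure-theoretic structure of the clipped output rather than any inequality. Clipping is many-to-one, since every point outside the ball of radius $C_{x_t}$ collapses onto its boundary sphere, so $M'$ has a mixed law with an absolutely continuous part on the open ball and a singular part supported on the sphere. Consequently one cannot simply write a clean output density for $M'$ and take a ratio, and a naive pointwise density argument would break down on the sphere. Routing the argument through the set-probability formulation and the preimage identity $f^{-1}(S)$ sidesteps this entirely, which is why I would phrase the whole proof via post-processing rather than densities. The only remaining thing to check carefully is that $f$ is genuinely input-independent, so that no data-dependent choice of $C_{x_t}$ sneaks into the post-processing step.
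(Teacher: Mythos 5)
Your proposal is correct and takes essentially the same route as the paper's proof: establish $\eta d_\chi$-privacy for the unclipped mechanism $M$ via the translation-invariant density ratio $\exp\big(\eta(\|\boldsymbol{y}-\boldsymbol{x}'\|-\|\boldsymbol{y}-\boldsymbol{x}\|)\big) \le e^{\eta\|\boldsymbol{x}-\boldsymbol{x}'\|}$, then dispose of the clipping by post-processing. Your extra care --- checking that the clipping radius $C_{x_t}$ is input-independent and phrasing post-processing via set probabilities and preimages to handle the singular part of the clipped law on the boundary sphere --- tightens the paper's one-line appeal to the post-processing property, but the underlying argument is identical.
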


\subsection{Denoise Model}
\label{sec:denoise}
\textbf{Limitation of server-side denoise:} the denoising ability of a server is limited by its lack of knowledge regarding the noise levels. The server's capacity to remove noise is inherently conflicted with the level of privacy protection. Intuitively, if the server could produce an appropriate denoised output on its own, there is a higher probability that it can also reconstruct the original user input. Proposition \ref{prop:mseserver} below gives the lower bound of mean square error (MSE) for server-side denoise algorithms. The proof can be found in Appendix \ref{app:mseserve}.
\begin{proposition}
\label{prop:mseserver}
    Let $\boldsymbol{y}\in \mathcal{Y}\subseteq \mathbb{R}^k$ be the original vector without noises added, and let $\hat{\boldsymbol{y}}\in \mathbb{R}^k$ be the noisy vector obtained under $\eta d_{\chi}$-privacy mechanism. Denote $D_s: \mathbb{R}^k \rightarrow \mathbb{R}^k$ as the denoising algorithm run by the server. Suppose $D_s$ is unbiased and the token embeddings are bounded by $B_x$:
    \begin{equation}
        \|\boldsymbol{x}'-\boldsymbol{x}\|\leq B_x, \forall \boldsymbol{x}',\ \boldsymbol{x}
    \end{equation}
    , then:
    \begin{equation}
        \mathbb{E} [\|D_s(\hat{\boldsymbol{y}}) - \boldsymbol{y}\|/k] \geq \frac{\sum_{i=1}^d \rm{diam}_i (\mathcal{Y})^2/4k}{e^{\eta B_x}-1}
    \end{equation}
    where $\rm{diam}_i (\mathcal{Y}) = \sup_{\boldsymbol{y}, \boldsymbol{y}'\in \mathcal{Y}: \boldsymbol{y}_j=\boldsymbol{y}_j' \forall j\neq i}|\boldsymbol{y}_i-\boldsymbol{y}_i'|$ is the diameter of $\mathcal{Y}$ in the $i$-th dimension.
\end{proposition}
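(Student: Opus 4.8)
The plan is to turn the statement into a per-coordinate estimation lower bound governed by the privacy parameter, and then sum over coordinates. First I would make the privacy constraint usable at the level of the returned embedding: by Theorem \ref{thm:etadp} together with the post-processing invariance of differential privacy (the cloud encoder and the denoiser act on $\hat{\boldsymbol{y}}$ without touching the raw input), the end-to-end map from a token input to the noisy output $\hat{\boldsymbol{y}}$ is $\eta d_\chi$-private. Combined with the assumed input bound $\|\boldsymbol{x}-\boldsymbol{x}'\|\le B_x$, this means that for any two inputs the induced output densities $p,q$ of $\hat{\boldsymbol{y}}$ satisfy the pointwise likelihood-ratio bound $p/q \le e^{\eta B_x}$ (equivalently, ordinary $\epsilon$-LDP with $\epsilon = \eta B_x$).

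Next, the reduction to one coordinate. Fix a reference clean output $\boldsymbol{y}\in\mathcal{Y}$ and, for each coordinate $i$, a companion $\boldsymbol{y}'\in\mathcal{Y}$ that agrees with $\boldsymbol{y}$ in every coordinate except the $i$-th, where the gap $\delta_i=|\boldsymbol{y}_i-\boldsymbol{y}'_i|$ is pushed toward $\mathrm{diam}_i(\mathcal{Y})$. Write $P,Q$ for the distributions of $\hat{\boldsymbol{y}}$ under the two generating inputs and let $T_i(\hat{\boldsymbol{y}})=[D_s(\hat{\boldsymbol{y}})]_i$. Unbiasedness of $D_s$ gives $\mathbb{E}_P[T_i]=\boldsymbol{y}_i$ and $\mathbb{E}_Q[T_i]=\boldsymbol{y}'_i$, so the coordinate-$i$ error under $P$ equals the variance $\mathbb{E}_P[(T_i-\boldsymbol{y}_i)^2]$ while the means are separated by exactly $\delta_i$.

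The heart of the argument is to convert this mean separation into a variance lower bound via a single Cauchy--Schwarz step (the Hammersley--Chapman--Robbins inequality): writing $\delta_i=\int (T_i-\boldsymbol{y}_i)(q-p)=\int (T_i-\boldsymbol{y}_i)\tfrac{q-p}{\sqrt{p}}\sqrt{p}$ and applying Cauchy--Schwarz yields $\delta_i^2 \le \mathbb{E}_P[(T_i-\boldsymbol{y}_i)^2]\cdot \chi^2(Q\Vert P)$, i.e. $\mathbb{E}_P[(T_i-\boldsymbol{y}_i)^2]\ge \delta_i^2/\chi^2(Q\Vert P)$. The privacy constraint enters when bounding the chi-square divergence: since $|q/p-1|\le e^{\eta B_x}-1$ pointwise, one factor of $|q-p|$ can be pulled out as $(e^{\eta B_x}-1)$ while the other integrates to total variation, giving $\chi^2(Q\Vert P)=\int \tfrac{(q-p)^2}{p}\le (e^{\eta B_x}-1)\int|q-p| \le 2(e^{\eta B_x}-1)$. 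Hence $\mathbb{E}_P[(T_i-\boldsymbol{y}_i)^2]\ge \delta_i^2/\big(2(e^{\eta B_x}-1)\big)$. Obtaining the linear (rather than quadratic) dependence on $e^{\eta B_x}-1$ is the delicate point: the naive bound $|q-p|\le (e^{\eta B_x}-1)p$ applied to both factors wastes a factor and produces $(e^{\eta B_x}-1)^2$, so the trick is to keep one factor as $|q-p|$ and integrate it to a bounded total-variation term.

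Finally I would sum over coordinates. Because every coordinate bound is taken under the same reference distribution $P$, the terms add to $\mathbb{E}_P[\|D_s(\hat{\boldsymbol{y}})-\boldsymbol{y}\|^2]\ge \tfrac{1}{2(e^{\eta B_x}-1)}\sum_i \delta_i^2$; letting each $\delta_i\to \mathrm{diam}_i(\mathcal{Y})$ and dividing by $k$ gives the claimed bound (indeed with a slightly sharper constant than $1/4$). The main obstacle, besides the chi-square estimate above, is the bookkeeping in this reduction: one must ensure the companions $\boldsymbol{y}'$ realizing the one-coordinate diameters $\mathrm{diam}_i(\mathcal{Y})$ can be attached to a common reference point $\boldsymbol{y}$, and that the generating inputs always lie within $B_x$ so the likelihood-ratio bound applies uniformly---both of which hold here, since the input bound is assumed for every pair.
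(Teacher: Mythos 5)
Your skeleton is the paper's: reduce the MSE of the unbiased $D_s$ to per-coordinate variances, apply the Hammersley--Chapman--Robbins two-point bound (your Cauchy--Schwarz step is exactly HCR), transfer the $\eta d_\chi$ guarantee to $\hat{\boldsymbol{y}}$ by post-processing (your pointwise ratio bound $q/p\le e^{\eta B_x}$ is the paper's lemma $\mathbb{D}_2\le\mathbb{D}_\infty\le \eta B_x$), and sum over coordinates. The one technical divergence is how you bound the chi-square term: you write $\chi^2(Q\Vert P)\le (e^{\eta B_x}-1)\int|q-p|\le 2(e^{\eta B_x}-1)$, whereas the paper's route via R\'enyi order $2$ amounts to the one-line pointwise bound $\chi^2(Q\Vert P)=\int q\,(q/p)\,-1\le e^{\eta B_x}-1$, which is tighter by exactly the factor of $2$ your total-variation detour gives away. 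Your worry about avoiding $(e^{\eta B_x}-1)^2$ is well placed, but the clean resolution is the $\mathbb{D}_2\le\mathbb{D}_\infty$ monotonicity, not the TV split.

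The genuine gap is in your final summation step. To compensate for the lost factor of $2$ you assert that the companions realizing the one-coordinate diameters ``can be attached to a common reference point,'' i.e.\ that each gap $\delta_i$ can be pushed to the full $\mathrm{diam}_i(\mathcal{Y})$ under a single reference distribution $P$, yielding the constant $1/2$. That is unjustified: $\mathrm{diam}_i(\mathcal{Y})$ is a supremum over pairs that may be different for each $i$, and for a general (non-product) $\mathcal{Y}$ no single $\boldsymbol{y}$ need admit, for every $i$, a companion in $\mathcal{Y}$ differing only in coordinate $i$ by nearly $\mathrm{diam}_i(\mathcal{Y})$ --- the fiber of $\mathcal{Y}$ through a fixed $\boldsymbol{y}$ in direction $i$ can be far shorter than the diameter realized by some other fiber (picture an L-shaped $\mathcal{Y}$). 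Your stated justification --- that the input bound $B_x$ holds for every pair --- is a non sequitur: $B_x$ controls the likelihood ratio of the observations, and says nothing about the geometry of $\mathcal{Y}$. The standard repair, which is where the proposition's $1/4$ comes from (the paper defers this step to Theorem A.1 of \cite{guo2022bounding}), concedes up to half the diameter per coordinate; but combined with your looser $\chi^2\le 2(e^{\eta B_x}-1)$, half-diameters produce only $\sum_i \mathrm{diam}_i(\mathcal{Y})^2/8$ in the numerator, strictly weaker than claimed. So as written your argument either overclaims (full diameters at a common reference, constant $1/2$) or underdelivers (constant $1/8$); replacing the TV step with the order-$2$ R\'enyi bound closes the arithmetic and recovers the paper's proof exactly.
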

\begin{remark}
The vector $\boldsymbol{y}$ can be: (i) the token representations uploaded from users, (ii) output embeddings, or (iii) any intermediate results returned by the language model based on the token embeddings. The instantiation of $\boldsymbol{y}$ is determined by the layer at which the server runs denoising algorithm.
\end{remark}

To address the limitation, we propose a denoise framework where users conduct error correction on the noisy embeddings using their specific noises and raw inputs. Given the black-box nature of neural network transformation on the privatized token representations, we propose to train a transformer-based model for embedding denoise. 

Let $\Tilde{X}=[\boldsymbol{\Tilde{x}}_1, \ldots, \boldsymbol{\Tilde{x}}_n]$, $Z=[\boldsymbol{z}_1, \ldots, \boldsymbol{z}_n]$ $\in$ $\mathbb{R}^{n\times d}$ denote, respectively, the privatized token representations and noise matrix. Noted that the noise vector is updated with the clipped privatized token embeddings $\boldsymbol{z} = M'(\boldsymbol{x}_t) - \boldsymbol{x}_t$. After a series of operations, the server returns a noisy embedding $\boldsymbol{e}_n$ capturing the context of input token to the user. The denoise model is parameterized by a $L$-layer transformer decoder, $D: \mathbb{R}^{(2n+1)\times d}\rightarrow \mathbb{R}^d$:
\begin{equation}
    \boldsymbol{e}_d = D(\boldsymbol{e}_n, \Tilde{X}, Z)
\end{equation}

The input to the denoise model $H_0$ is a concatenation of vectors:
\begin{equation}
    H_0 = [\boldsymbol{e}_n; \boldsymbol{\Tilde{x}}_1, \ldots, \boldsymbol{\Tilde{x}}_n; \boldsymbol{z}_1, \ldots, \boldsymbol{z}_n]
\end{equation}

Let $\boldsymbol{h}_{t}^l$ represents the hidden state for the $t^{th}$ vector at layer $l$. This state is computed using the following recursive relation:
\begin{equation}
    \boldsymbol{h}_{t}^l = \boldsymbol{h}_{t}^{l-1} + \boldsymbol{a}_{t}^{l-1} + \boldsymbol{m}_{t}^{l-1} 
\end{equation}
where
\begin{equation}
\begin{gathered}
\boldsymbol{a}_{t}^{l-1} = attn^l (\boldsymbol{h}_1^{l-1}, \boldsymbol{h}_2^{l-1}, ..., \boldsymbol{h}_{2n+1}^{l-1}),\\
    \boldsymbol{m}_{t}^{l-1} = W_{proj}^l \sigma (W_{fc}^l \gamma (\boldsymbol{a}_{t}^{l} + \boldsymbol{h}_{t}^{l-1}))
\end{gathered}
\end{equation}
The denoised embedding is obtained directly from the hidden state representation for $\boldsymbol{e}_n$ at the final layer:
\begin{equation}
    \boldsymbol{e}_d = \boldsymbol{h}_0^L
\end{equation}
We visualize the architecture of the denoise model in Figure \ref{denoisemod}. Intuitively, the noisy embedding undergoes $L$ steps to transform into the denoised embedding. In each step, the transformation is conditioned on the feature representations of raw IRs as well as specific noises.

To train a denoise model, the server samples a set of noises added to the token representations of public corpus. Subsequently, the clean embedding $\boldsymbol{e}_c$ and noisy embedding $\boldsymbol{e}_n$ are computed from, respectively, the raw and privatized token representations:
\begin{equation}
    \boldsymbol{e}_c = G(X),\ \boldsymbol{e}_n = G(\Tilde{X})
\end{equation}

The denoise model is trained on the above datasets with the objective to minimize the deviation between denoised and clean embeddings:
\begin{equation}
    \min_D \mathbb{E} [\|D(\boldsymbol{e}_n, \Tilde{X}, Z) - \boldsymbol{e}_c\|^2]
\end{equation}

The pretrained model is shared with users to conduct denoising on the received embeddings locally. It is important to note that the denoise model does not expose any information regarding user data. This is primarily due to the fact that the model's training is carried out exclusively on a public dataset, rendering it irrelevant to users' private inputs.

\subsection{Complexity Analysis}
In this section, we analyze the communication complexity and user computation complexity of our framework.

\textit{Communication complexity}: the communication cost can be broken as: (1) user uploads the token representations to the server ($O(nd)$ messages); (2) server share the embeddings with user ($O(d)$ messages). Hence, the total communication overhead is $O(nd)$.

\textit{User computation complexity}: user's computation cost can be broken as: (1) retrieving token embeddings from input text ($O(n)$ complexity); (2) performing local denoising with the transformer-based model ($O(n^2dL)$ complexity \cite{vaswani2017attention}). Therefore, the user's computation cost adds up to $O(n^2dL)$.

\begin{table*}[!htbp]
\caption{Accuracies on downstream tasks for BERT.}
\label{tab:accBERT}
\centering
\begin{small}
\begin{tabularx}{\textwidth}{@{}lXXXXXXXXXX@{}} 
\toprule
                   & \multicolumn{3}{c}{DistillBert (66m)} & \multicolumn{3}{c}{Bert Base (110m)} & \multicolumn{3}{c}{Bert Large (340m)} \\ \cmidrule(l){2-10} 
$\eta$                 & 100    & 500 & $\infty$      & 100    & 500  & $\infty$  & 100    & 500   & $\infty$     \\ \midrule
CoLA  & 0.693 &   0.694  &  0.701   &  0.688 &   0.694  &  0.751 & 0.697  &  0.699   &  0.757 \\
QQP   &    0.632   &  0.649  & 0.683  & 0.667   & 0.688  &   0.728 &   0.676  &  0.684  &  0.706 \\
MRPC  &  0.683  &  0.691 &   0.695 &   0.689  &   0.725  &  0.742 &  0.684 &  0.689 &  0.701  \\ 
RTE &    0.578  &   0.580  &  0.592  &  0.592   &  0.610  &   0.616  &    0.590  &   0.601  & 0.621\\
\bottomrule
\end{tabularx}
\end{small}
\vspace{2mm} 
\end{table*}

\begin{table*}[!htbp]
\caption{Accuracies on downstream tasks for T5.}
\label{tab:accT5}
\centering
\begin{small}
\begin{tabularx}{\textwidth}{lXXXXXXXXXXXXX}
\hline
     & \multicolumn{4}{c}{T5 Small (60m)} & \multicolumn{4}{c}{T5 Base (220m)} & \multicolumn{4}{c}{T5 Large (770m)} \\ \cline{2-13} 
$\eta$  & 0.001  & 0.01  & 1      & $\infty$ & 0.001  & 0.01  & 1      & $\infty$ & 0.001  & 0.01   & 1      & $\infty$ \\ \hline
CoLA &   0.69     &    0.69   & 0.69 & 0.71   &   0.69     &   0.70    & 0.70 &     0.73     &   0.70    & 0.70  & 0.70 &   0.75   \\
QQP  &    0.68    &   0.69    &    0.68    & 0.71    &   0.66     &    0.67   &    0.69    & 0.72   & 0.66  & 0.67  &   0.70     & 0.71   \\
MRPC &    0.68    &   0.69    & 0.69 & 0.70   &   0.69     &   0.69    & 0.70 & 0.71   &    0.68    & 0.69  & 0.69 & 0.71   \\
RTE  &    0.55    &    0.56   & 0.58 & 0.60    &   0.57     &   0.58    & 0.62 &   0.63  &   0.57     &    0.59    & 0.61  &   0.62   \\ \hline
\end{tabularx}
\end{small}
\vspace{2mm} 
\end{table*}

\begin{table*}[!htbp]
\caption{Accuracies on downstream tasks for GPT2.}
\label{tab:accGPT2}
\centering
\begin{small}
\begin{tabularx}{\textwidth}{@{}lXXXXXXXXXXX@{}}
\toprule
   & \multicolumn{3}{c}{GPT2 Small} & \multicolumn{3}{c}{GPT2 Medium} & \multicolumn{3}{c}{GPT2 large} & \multicolumn{2}{c}{GPT2 Xlarge} \\ 
   & \multicolumn{3}{c}{(120m)} & \multicolumn{3}{c}{(345m)} & \multicolumn{3}{c}{(774m)} & \multicolumn{2}{c}{(1.5b)} \\ 
   \cmidrule(l){2-12} 
$\eta$                & 1      & 100 & $\infty$    & 1     & 100  & $\infty$  & 1       & 100   & $\infty$  &   100   & $\infty$  \\ \midrule
CoLA  &  0.688  &   0.700  &  0.709  &  0.690   &    0.698 &  0.728  &  0.700  &     0.701  &   0.724 & 0.693 & 0.766 \\
QQP   &   0.645  &      0.657  &   0.716      &   0.647 &   0.652  &  0.711  & 0.637  &  0.650   &  0.721 & 0.650& 0.741 \\
MRPC  &  0.688  &   0.691  &   0.720   & 0.688 &   0.693  &  0.710  &  0.674 & 0.691 &     0.701 &  0.686 & 0.705 \\ 
RTE &  0.556  &    0.563  &  0.581 &   0.567  &  0.578  &   0.583  &   0.581   &   0.606  &  0.611 & 0.584 & 0.592 \\
 \bottomrule
\end{tabularx}
\end{small}
\vspace{2mm} 
\end{table*}

\section{Experiment}
\subsection{Experiment Settup}
We evaluate our framework on three classes of LLMs: Bert \cite{devlin2018bert}, GPT2 \cite{radford2019language}, and T5 \cite{raffel2020exploring}. The architectures of our denoise and downstream models are described in appendix \ref{app:hyperparam}. We benchmark our experiments against three baseline methods: (i) Token embedding privatization (TokEmbPriv) \cite{qu2021natural}, where the token embeddings are perturbed by the user before sending them to the server. (ii) Text-to-text privatization (Text2Text) \cite{feyisetan2019privacy, qu2021natural}, where the plain token sequence is transformed into a privatized token sequence by replacing each word with the perturbed token embeddings. (iii) Privacy-Preserving Prompt Tuning (RAPT) \cite{li2023privacypreserving} that protects prompt tuning and inference with local DP. 

Table \ref{tab:compsota} summarizes the existing privacy-preserving LLM inference approaches along four dimensions: (i) involvement of finetuning, including parameter efficient finetunings, on task specific data, (ii) adoption of server-side denoise technique on privatized values, (iii) adoption of user-side denoise technique on privatized values, (iv) privacy guarantee in terms of the security in multiparty computation (SMPC), or local differential privacy (LDP). Noted that RAPT employs a reconstruction head to improve the robustness of prompt tuning process, where the module reconstructs the random tokens to help the LLM better understand the privatized token at training stage. However, the precise mechanism by which the LLM learns to decode the privatized tokens remains unclear, especially considering that the reconstruction module works solely on random tokens. Furthermore, the reconstruction head is discarded during LLM inference stage, rendering no denoise mechanism for LLM inference.

To assess the performance of our approach, we employ two distinct evaluation metrics: (1) similarity with $\boldsymbol{e}_c$: we compute the mean square error (MSE) and cosine similarity (COS) between $\boldsymbol{e}_c$ and $\boldsymbol{e}_d$, the clean and privatized embeddings, to quantify the extent of data variations induced by the perturbation process; (2) performance on downstream tasks: we utilize accuracy scores (ACC) and area under the roc curve (AUC) to gauge the utility of the embeddings on downstream tasks.

 \begin{table}[htb]
\caption{Comparison of different privacy-preserving LLM inference approaches. }
\label{tab:compsota}
\begin{center}
\scalebox{0.95}{
\begin{tabular}{@{}lcccc@{}}
\toprule
& Finetuning & \makecell{Server\\denoise} & \makecell{User\\denoise}  & \makecell{Privacy\\guarantee} \\
\midrule
PCFT & $\times$ & $\times$ & $\times$ &  SMPC \\
TokEmbPriv & $\times$  & $\times$ & $\times$ & LDP \\
Text2Text & $\times$ & $\times$ & $\times$  & LDP \\
 RAPT& $\surd$ & $\surd$ & $\times$ & LDP\\
 SnD & $\times$ & $\times$ & $\surd$ & LDP \\
 \bottomrule
\end{tabular}}
\end{center}
\end{table}


\subsection{Datasets}
To train the denoise model, we use the combination of 20 datasets to better mimic the generalized training scenarios, including TweetEval Offensive ~\cite{barbieri2020tweeteval}, Hate Speech 18 ~\cite{gibert2018hate}, Health Fact ~\cite{kotonya-toni-2020-explainable}, Daily Dialogue ~\cite{li2017dailydialog}, etc. See the full list of datasets we used in Appendix \ref{app:dataset}. 

We test our denoising performance on a collection of downstream tasks: (i) Sentence classification:  CoLA \cite{warstadt2019neural},
    (ii) Pair similarity: Quora Question Pairs
(QQP) ~\cite{Chen2018}, MSR Paraphrase Corpus (MRPC) \cite{dolan2005automatically}, 
(ii) Recognizing Textual Entailment (RTE) ~\cite{Dagan2006, BarHaim2006, Giampiccolo2007, Bentivogli2009}.
Refer to Appendix \ref{app:dstask} for the evaluation details.

\subsection{Empirical Privacy Evaluation}
\subsubsection{Mutual Information}
We leverage mutual information (MI) to evaluate the privacy leakage under each level of $\eta$ among varying models. Mutual information (MI) measures how much knowing one variable reduces uncertainty about the other, i.e., how much information the two variables share. We follow Kozachenko and Leonenko’s method \cite{delattre2017kozachenko} to estimate the mutual information from empirical distribution, where the entropy is estimated from the distance to the k-nearest neighbor. The MI between original and privatized token embedding, $X$ and $\tilde{X}$, is formulated as:
\begin{equation}
\label{eq:mi}
    \hat{I}(X;\tilde{X}) = \frac{d}{N} \sum_{i=1}^{N} \log \epsilon_{\tilde{X}}(i) - \frac{d}{N} \sum_{i=1}^{N} \log \epsilon_{Z}(i)
\end{equation}
, where $N$ is the sample size, $d$ is the embedding size, $Z$ denote the noises added to $X$, and $\epsilon(i)$ is the distance of the $i^{th}$ sample to its k-nearest neighbor. See Appendix \ref{app:mi} for the derivation.

\subsubsection{Attacks}
We simulate two inference attacks on the privatized token embeddings from SnD to investigate the privacy protection ability under varying $\eta$.

\textbf{Token embedding inversion attack \cite{li2023privacypreserving,qu2021natural}:} a token-level attack that reconstructs the raw text from the privatized token representation. Given a noisy embedding $\hat{\boldsymbol{x}}_t, t\in [1, n]$, the server identify a token $x_t$ closest to $\hat{\boldsymbol{x}}_t$ measured by $L_2$ distance in the embedding space:
\begin{equation}
    x_t = \arg\min_k \|\boldsymbol{w}_k-\hat{\boldsymbol{x}}_t\|
\end{equation}
, where $\boldsymbol{w}_k$ represents the representation for the $k^{th}$ token in the vocabulary.

\textbf{Attribute inference attack \cite{li2023privacypreserving}:} an attack that infers the sensitive features of records from the privatized token representations. We rely on the twitter text dataset \cite{vashisth2020gender} to predict the gender based on the user's review.

\begin{table*}[!htp]
\caption{AUC comparisons for BERT models with QQP task.}
\label{tab:auccompBERT}
\centering
\begin{small}
\begin{tabularx}{\textwidth}{@{}lXXXXXXXXXX@{}}
\toprule
                   & \multicolumn{3}{c}{DistillBert} & \multicolumn{3}{c}{Bert Base} & \multicolumn{3}{c}{Bert Large} \\ \cmidrule(l){2-10} 
$\eta$                & 50    & 100 & 500      & 50    & 100  & 500  & 50    & 100   & 500     \\ \midrule
TokenEmbPriv  & 0.502 &   0.518  &  0.521   &  0.511 &   0.535  &  0.557 & 0.522  &  0.525   &  0.541 \\
Text2Text   &    0.541   &  0.541  & 0.541  & 0.512   & 0.513  &   0.513 &   0.507  &  0.537  &  0.540 \\
RAPT  & 0.517 & 0.515  & 0.545  &   0.513  & 0.528 & 0.551  & 0.515  &  0.539 &  0.565  \\ 
\textbf{SnD} &    \textbf{0.583}  &   \textbf{0.600}  &  \textbf{0.610}  &  \textbf{0.674}   &  \textbf{0.675}  &   \textbf{0.691}  &    \textbf{0.639}  &   \textbf{0.655}  & \textbf{0.657}\\
\bottomrule
\end{tabularx}
\end{small}
\vspace{2mm} 
\end{table*}

\begin{table*}[!htp]
\caption{AUC comparison for GPT Models with MRPC task.}
\label{tab:auccompGPT}
\centering
\begin{small}
\begin{tabularx}{\textwidth}{@{}lXXXXXXXXXX@{}}
\toprule
                   & \multicolumn{3}{c}{GPT2 Small} & \multicolumn{3}{c}{GPT2 Medium} & \multicolumn{3}{c}{GPT2 large} \\ \cmidrule(l){2-10} 
$\eta$                & 1    & 50 & 100      & 1    & 50 & 100   & 1    & 50 & 100     \\ \midrule
TokenEmbPriv  & 0.514 &   0.525  &  0.532   &  0.526 &   0.523  &  0.530 & 0.512  &  0.513   &  0.518 \\
Text2Text   &    0.498   &  0.502  & 0.502  & 0.496   & 0.498  &   0.498 &   0.491  &  0.499  &  0.500 \\
RAPT  & 0.504  &  0.521 &  0.524  &  0.503   & 0.502  &  0.539 & 0.500 & 0.510  &  0.547 \\ 
\textbf{SnD} &    \textbf{0.542}  &   \textbf{0.552}  &  \textbf{0.579}  &  \textbf{0.553}   &  \textbf{0.578}  &   \textbf{0.573}  &    \textbf{0.547}  &   \textbf{0.556}  & \textbf{0.556}\\
\bottomrule
\end{tabularx}
\end{small}
\vspace{2mm} 
\end{table*}

\begin{table*}[!htp]
\caption{AUC comparison for T5 Models with RTE task.}
\label{tab:auccompT5}
\centering
\begin{small}
\begin{tabularx}{\textwidth}{@{}lXXXXXXXXXX@{}}
\toprule
                   & \multicolumn{3}{c}{T5 Small} & \multicolumn{3}{c}{T5 Base} & \multicolumn{3}{c}{T5 Large} \\ \cmidrule(l){2-10} 
$\eta$                & 0.001    & 0.01 &  0.1      & 0.001    & 0.01 &  0.1   & 0.001    & 0.01 &  0.1    \\ \midrule
TokenEmbPriv  & 0.503 &  0.515  &  0.514   &  0.505 &   0.525  &  0.537 & 0.518  &  0.503   &  0.537 \\
Text2Text   &    0.512   &  0.533  & 0.537  & 0.504   & 0.527  &   0.537 &   0.501  &  0.507  &  0.516 \\
RAPT  &  0.510 & 0.548  &  0.547  &  0.506 &  0.532 &  0.533 &  0.514 & 0.519  & 0.516 \\ 
\textbf{SnD} &    \textbf{0.547}  &   \textbf{0.577}  &  \textbf{0.575}  &  \textbf{0.566}   &  \textbf{0.564}  &   \textbf{0.611}  &    \textbf{0.566}  &   \textbf{0.580}  & \textbf{0.599}\\
\bottomrule
\end{tabularx}
\end{small}
\vspace{2mm} 
\end{table*}

\begin{figure*}[!htbp]
\centering
\includegraphics[width=0.9\linewidth]{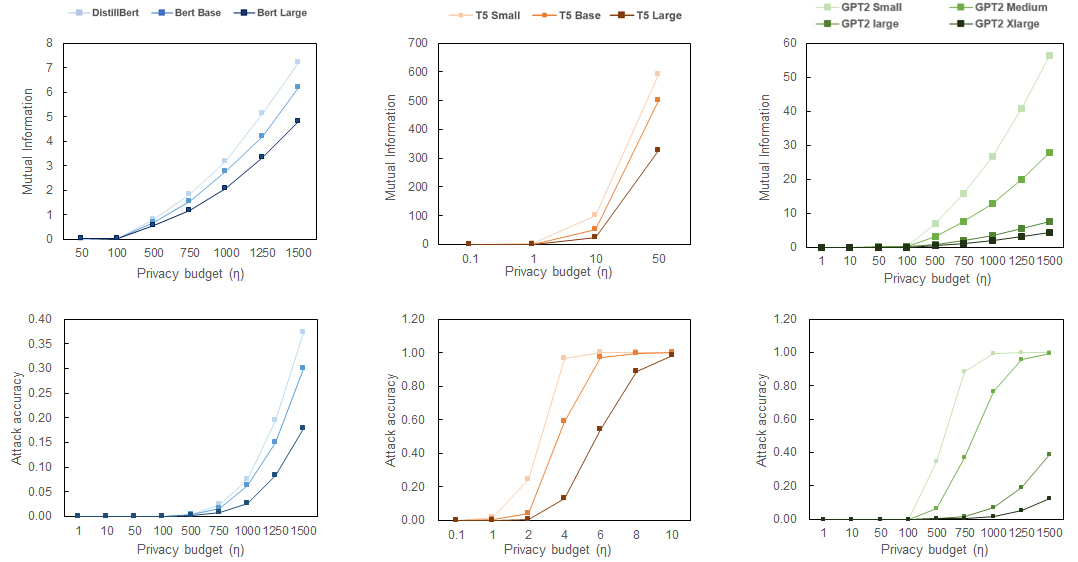}
\caption{Estimated mutual information (MI) and embedding inversion attack accuracy under varying $\eta$. MI and attack accuracies approach $0$ under $\eta\leq 0.1$, $\eta\leq 50$, and $\eta\leq  10$ for T5, BERT, and GPT2 models, respectively.}
\label{fig:attackmi}
\end{figure*}

\subsubsection{Geometry of the Embedding Space}
According to expression \ref{eq:mi}, the variaty in MI under a constant $\eta$ comes from $\frac{d}{N} \sum_{i=1}^{N} \log \epsilon_{\tilde{X}}(i)$, the average distance to the k-nearest neighbor could be a reliable proxy for. To understand the variation of $\eta$ across different models, we analyze the embedding space with the following metrics: (i) Euclidean distances with sorrounding tokens: we compute the average distances between each token and its k-nearest neighbors, (ii) Euclidean distances between raw token embeddings and its embeddings perturbed by equation \ref{eq:noise}.


\subsection{Experiment Results}
\subsubsection{Privacy Experiments}
\label{sec:inference_attack}

In this section we present the results for mutual information estimation (MI) and token embedding inversion attack. The discussion for attribute inference attack and Geometric properties can be found in Appendix \ref{app:attrinfer}. 

Figure \ref{fig:attackmi} the attack accuracy, measured by the percentage of token correctly identified by the attack, for the three series of models at various $\eta$ values. It can be observed that: (1) for Bert models, the attack success rates remain below $1\%$ with $\eta\leq 500$. GPT models exhibit negligible attack accuracy with $\eta$ values up to 100, while GPT Xlarge demonstrates exceptional robustness against inference attacks as $\eta$ increases. T5 models, on the other hand, require much smaller privacy budgets to resist inference attacks effectively. (2) The attack success rate is nearly zero for mutual information less than 0.02. (3) The mutual information is within 0.02 under $\eta\leq 0.1$, $\eta\leq 50$, and $\eta\leq  10$ for T5, BERT, and GPT2 models, respectively.

\subsubsection{Performance on Downstream Task}
We record the performance on various downstream task in terms of accuracy (ACC) under varing $\eta$ in Table~\ref{tab:accBERT}, ~\ref{tab:accT5} and~\ref{tab:accGPT2}. The utility is benchmarked against the case without any noise injection and thus no denoise operation, denoted by $\eta=\infty$. One important observation is that our framework maintains acceptable accuracy compared with the non-privatized setting. Across the chosen $\eta$ levels and four downstream tasks, Bert, T5, and GPT models yield average model losses of 4.31\%, 4.48\%, and 5.25\%, respectively. It is observed that under the same class of model, larger models tend to incur greater utility loss, which aligns with the intuitive understanding that transformed noises become increasingly unpredictable—and consequently, more challenging to denoise—after traversing through additional layers. Noted that we perform evaluation on the embeddings from pre-trained model without any fine-tuning, and thus there's a gap between the accuracy in our results for $\eta=\infty$ and the SOTA benchmarks.

\subsubsection{Comparison with Baseline}
In Table \ref{tab:auccompBERT}, \ref{tab:auccompGPT}, and \ref{tab:auccompT5}, we assess and compare the performance of three model families against three baseline methods using AUC. For the three model families, we selected three distinct $\eta$ levels for experimentation, given the varying noise tolerance of each model. Note that $\eta$ levels do not possess a universal implication across model families, as varying models exhibit distinct robustness against inference attacks, as delineated in Section \ref{sec:inference_attack}.

For each model family, a representative task was selected. For BERT models, SnD outperforms TokenEmbPriv, Text2Text, and RAPT by an average of 22.2\%, 22.1\%, and 20.9\%, respectively. For GPT models, SnD results in AUC higher than the three baselines from 7.3\% to 12.3\% on average. For T5 models, the performance of SnD is higher than the baselines by an average of over 10\%. It can be observed that TokenEmbPriv and Text2Text exhibit poorer performance compared to the other two approaches. This could be attributed to the lack of denoise or reconstruction mechanism within these methods. Furthermore, the unbounded noise support in TokenEmbPriv leads to significant deviations between the privatized token representations and their original values. The MSE and COS between the initial and recovered embeddings in presented in Appendix \ref{app:compsim}. Both AUC and the similarity metrics suggest our technique's proficiency in restoring the original attributes of the noised embedding.

\subsubsection{Finetuing Budget with Model Update}
In reality, the server may periodically update its model. To account for this, we evaluate the additional training budget on the denoise model if the server finetune the underlying model with new data. Specifically, we fine-tune the BERT-base model with the Stanford Sentiment Treebank (SST2) dataset, varying the sample size from 1,000 to 60,000 for one epoch.  Table \ref{tab:updateft} presents the AUC under $\eta$ =100 with the MRPC task.  We can observe that: (a) when BERT-base model is finetuned with less than 10000 samples, the loss in utility is within 4.8\% if the denoise model is not adjusted; (b) when BERT-base model is finetuned with less than 60000 samples, finetuning the denoise model within only 1000 samples for one epoch could maintain the utility.

\subsubsection{Other Studies}
\label{sec:otherstudy}
For other studies, we conduct overhead analysis of our framework (see Appendix \ref{app:compcost}), evaluation model performance on different public dataset (see Appendix \ref{app:puclicper}), and ablation studies for the impact of server-side denoise model and norm clipping (see Appendix \ref{app:ablation}).

\section{Discussion and Future Work}
\label{sec:futurework}
\textbf{Scalability to larger language model}: our experiments primarily focused on language models ranging from 100MB to 1GB in size. We also tested our approach on larger language model, such as LLaMa and OPT-6.7B. While we observed substantial improvements in terms of MSE and COS of the embeddings compared to the baseline, we discovered that the accuracy on downstream tasks still requires further enhancement. We suspect that the inputs undergo significantly more intricate transformations in these larger language models, necessitating the use of more sophisticated noise and denoising mechanisms.

\textbf{Reduce user computation cost}: local denoising constitutes a major component of user's computation overhead. We observe that the size of denoise model, and thus the user computation cost, scale with the underlying LLM. For those users with limited computation resource, it is crucial to design a lightweight denoise mechanism with minimal computation cost. 

\textbf{Sequence-to-sequence (S2S) inference}: it is of great interest to extend our EaaS framework to S2S inference model. One important obstacle of the generalization is the noise amplification issue with S2S model. In particular, S2S relies on the auto-regressive mechanism, where the prediction of previous token is taken as an input to the next token. Therefore, the error from the previous prediction would exaggerate the deviation of the following tokens. A universal denoise model might be insufficient to correct the errors in the generated sequence.

\section{Conclusion}
This paper proposes SnD, a framework that employs split inference and denoising techniques to protect LLM inference with LDP. We split the language model to deploy the token representation layer on user side. User perturbs the token embeddings to guarantee $d_\chi$-privacy before transmitting them to the server. To improve the utility of embeddings, user conducts local denoising with a pre-trained model leveraging the raw token representations and specific noises. The empirical studies show that SnD performs better in maintaining the utility of embeddings compared with baseline methods by over 10\% on average.

\section*{Impact Statement}
This paper presents work aimed at advancing the field of Trustworthy Machine Learning, particularly focusing on improving privacy in large language models (LLMs) through our Split-N-Denoise (SnD) framework. Although our primary goal is technical innovation, we acknowledge the broader societal implications of improving privacy in machine learning applications. The SnD framework offers a potential increase in user trust and broader adoption of LLM technologies by addressing privacy concerns. However, as this research primarily contributes to technical aspects of LLMs, we believe that the ethical impacts and societal consequences align with those well established in advancing machine learning.

\bibliography{example_paper}

\begin{thebibliography}{69}
\providecommand{\natexlab}[1]{#1}
\providecommand{\url}[1]{\texttt{#1}}
\expandafter\ifx\csname urlstyle\endcsname\relax
  \providecommand{\doi}[1]{doi: #1}\else
  \providecommand{\doi}{doi: \begingroup \urlstyle{rm}\Url}\fi

\bibitem[Almeida et~al.(2011)Almeida, Hidalgo, and Yamakami]{Almeida2011SpamFiltering}
Almeida, T.~A., Hidalgo, J. M.~G., and Yamakami, A.
\newblock {Contributions to the Study of SMS Spam Filtering: New Collection and Results}.
\newblock In \emph{Proceedings of the 2011 ACM Symposium on Document Engineering (DOCENG'11)}, 2011.

\bibitem[Balle \& Wang(2018)Balle and Wang]{balle2018improving}
Balle, B. and Wang, Y.-X.
\newblock Improving the gaussian mechanism for differential privacy: Analytical calibration and optimal denoising, 2018.

\bibitem[Bar-Haim et~al.(2006)Bar-Haim, Dagan, Dolan, Ferro, Giampiccolo, Magnini, and Szpektor]{BarHaim2006}
Bar-Haim, R., Dagan, I., Dolan, B., Ferro, L., Giampiccolo, D., Magnini, B., and Szpektor, I.
\newblock The second pascal recognising textual entailment challenge.
\newblock In \emph{Proceedings of the Second PASCAL Challenges Workshop}, 2006.

\bibitem[Barbieri et~al.(2020)Barbieri, Camacho-Collados, Espinosa-Anke, and Neves]{barbieri2020tweeteval}
Barbieri, F., Camacho-Collados, J., Espinosa-Anke, L., and Neves, L.
\newblock {TweetEval:Unified Benchmark and Comparative Evaluation for Tweet Classification}.
\newblock In \emph{Proceedings of Findings of EMNLP}, 2020.

\bibitem[Bentivogli et~al.(2009)Bentivogli, Dagan, Dang, Giampiccolo, and Magnini]{Bentivogli2009}
Bentivogli, L., Dagan, I., Dang, H.~T., Giampiccolo, D., and Magnini, B.
\newblock The fifth pascal recognizing textual entailment challenge.
\newblock In \emph{Proceedings of the TAC 2009 Workshop}, 2009.

\bibitem[Casanueva et~al.(2020)Casanueva, Temcinas, Gerz, Henderson, and Vulic]{Casanueva2020}
Casanueva, I., Temcinas, T., Gerz, D., Henderson, M., and Vulic, I.
\newblock {Efficient Intent Detection with Dual Sentence Encoders}.
\newblock In \emph{Proceedings of the 2nd Workshop on NLP for ConvAI - ACL 2020}, 2020.
\newblock URL \url{https://arxiv.org/abs/2003.04807}.
\newblock Data available at https://github.com/PolyAI-LDN/task-specific-datasets.

\bibitem[Chatzikokolakis et~al.(2013)Chatzikokolakis, Andrés, Bordenabe, and Palamidessi]{Chatzikokolakis2013}
Chatzikokolakis, K., Andrés, M., Bordenabe, N., and Palamidessi, C.
\newblock Broadening the scope of differential privacy using metrics.
\newblock 07 2013.
\newblock ISBN 978-3-642-39076-0.
\newblock \doi{10.1007/978-3-642-39077-7_5}.

\bibitem[Chen et~al.(2022)Chen, Bao, Huang, Dong, Jiao, Jiang, Zhou, Li, and Wei]{chen2022x}
Chen, T., Bao, H., Huang, S., Dong, L., Jiao, B., Jiang, D., Zhou, H., Li, J., and Wei, F.
\newblock The-x: Privacy-preserving transformer inference with homomorphic encryption.
\newblock \emph{arXiv preprint arXiv:2206.00216}, 2022.

\bibitem[Chen et~al.(2023)Chen, Li, Liu, and Yu]{chen2023hide}
Chen, Y., Li, T., Liu, H., and Yu, Y.
\newblock Hide and seek (has): A lightweight framework for prompt privacy protection, 2023.

\bibitem[Chen et~al.(2018)Chen, Zhang, Zhang, and Zhao]{Chen2018}
Chen, Z., Zhang, H., Zhang, X., and Zhao, L.
\newblock Quora question pairs, 2018.
\newblock URL \url{https://www.kaggle.com/c/quora-question-pairs}.

\bibitem[Dagan et~al.(2006)Dagan, Glickman, and Magnini]{Dagan2006}
Dagan, I., Glickman, O., and Magnini, B.
\newblock The pascal recognising textual entailment challenge.
\newblock In \emph{Proceedings of the First International Conference on Machine Learning Challenges}, 2006.

\bibitem[de~Gibert et~al.(2018)de~Gibert, Perez, Garc{\'\i}a-Pablos, and Cuadros]{gibert2018hate}
de~Gibert, O., Perez, N., Garc{\'\i}a-Pablos, A., and Cuadros, M.
\newblock {Hate Speech Dataset from a White Supremacy Forum}.
\newblock In \emph{Proceedings of the 2nd Workshop on Abusive Language Online ({ALW}2)}, pp.\  11--20, Brussels, Belgium, October 2018. Association for Computational Linguistics.
\newblock \doi{10.18653/v1/W18-5102}.
\newblock URL \url{https://www.aclweb.org/anthology/W18-5102}.

\bibitem[Delattre \& Fournier(2017)Delattre and Fournier]{delattre2017kozachenko}
Delattre, S. and Fournier, N.
\newblock On the kozachenko--leonenko entropy estimator.
\newblock \emph{Journal of Statistical Planning and Inference}, 185:\penalty0 69--93, 2017.

\bibitem[Devlin et~al.(2018)Devlin, Chang, Lee, and Toutanova]{devlin2018bert}
Devlin, J., Chang, M.-W., Lee, K., and Toutanova, K.
\newblock Bert: Pre-training of deep bidirectional transformers for language understanding.
\newblock \emph{arXiv preprint arXiv:1810.04805}, 2018.

\bibitem[Ding et~al.(2022)Ding, Wu, Li, Wu, Tan, Xu, Pan, and Yang]{ding2022efficient}
Ding, Y., Wu, X., Li, Z., Wu, Z., Tan, S., Xu, Q., Pan, W., and Yang, Q.
\newblock An efficient industrial federated learning framework for aiot: a face recognition application.
\newblock \emph{arXiv preprint arXiv:2206.13398}, 2022.

\bibitem[Ding et~al.(2024)Ding, Wu, Meng, Luo, Wang, and Pan]{pmlr-v235-ding24g}
Ding, Y., Wu, X., Meng, Y., Luo, Y., Wang, H., and Pan, W.
\newblock Delving into differentially private {T}ransformer.
\newblock In \emph{Proceedings of the 41st International Conference on Machine Learning}, volume 235, pp.\  11049--11071, 2024.

\bibitem[Dolan \& Brockett(2005)Dolan and Brockett]{dolan2005automatically}
Dolan, B. and Brockett, C.
\newblock Automatically constructing a corpus of sentential paraphrases.
\newblock In \emph{Third International Workshop on Paraphrasing (IWP2005)}, 2005.

\bibitem[Du et~al.(2023)Du, Yue, Chow, Wang, Huang, and Sun]{du2023dp}
Du, M., Yue, X., Chow, S.~S., Wang, T., Huang, C., and Sun, H.
\newblock Dp-forward: Fine-tuning and inference on language models with differential privacy in forward pass.
\newblock \emph{arXiv preprint arXiv:2309.06746}, 2023.

\bibitem[Duan et~al.(2023)Duan, Dziedzic, Papernot, and Boenisch]{duan2023flocks}
Duan, H., Dziedzic, A., Papernot, N., and Boenisch, F.
\newblock Flocks of stochastic parrots: Differentially private prompt learning for large language models, 2023.

\bibitem[Dunn(2021)]{dunn2021representations}
Dunn, J.~E.
\newblock Representations of language varieties are reliable given corpus similarity measures.
\newblock In \emph{Proceedings of the Eighth Workshop on NLP for Similar Languages, Varieties, and Dialects}, pp.\  28--38, 2021.

\bibitem[Dwork(2006)]{dwork2006differential}
Dwork, C.
\newblock Differential privacy.
\newblock In \emph{International colloquium on automata, languages, and programming}, pp.\  1--12. Springer, 2006.

\bibitem[Dwork et~al.(2014)Dwork, Roth, et~al.]{dwork2014algorithmic}
Dwork, C., Roth, A., et~al.
\newblock The algorithmic foundations of differential privacy.
\newblock \emph{Foundations and Trends{\textregistered} in Theoretical Computer Science}, 9\penalty0 (3--4):\penalty0 211--407, 2014.

\bibitem[Fernandes et~al.(2019)Fernandes, Dras, and McIver]{fernandes2019generalised}
Fernandes, N., Dras, M., and McIver, A.
\newblock Generalised differential privacy for text document processing.
\newblock In \emph{Principles of Security and Trust: 8th International Conference, POST 2019, Held as Part of the European Joint Conferences on Theory and Practice of Software, ETAPS 2019, Prague, Czech Republic, April 6--11, 2019, Proceedings 8}, pp.\  123--148. Springer International Publishing, 2019.

\bibitem[Feyisetan et~al.(2019)Feyisetan, Balle, Drake, and Diethe]{feyisetan2019privacy}
Feyisetan, O., Balle, B., Drake, T., and Diethe, T.
\newblock Privacy- and utility-preserving textual analysis via calibrated multivariate perturbations, 2019.

\bibitem[Giampiccolo et~al.(2007)Giampiccolo, Magnini, Dagan, and Dolan]{Giampiccolo2007}
Giampiccolo, D., Magnini, B., Dagan, I., and Dolan, B.
\newblock The third pascal recognizing textual entailment challenge.
\newblock In \emph{Proceedings of the ACL-PASCAL Workshop on Textual Entailment and Paraphrasing}, 2007.

\bibitem[Gokaslan* et~al.(2019)Gokaslan*, Cohen*, Pavlick, and Tellex]{Gokaslan2019OpenWeb}
Gokaslan*, A., Cohen*, V., Pavlick, E., and Tellex, S.
\newblock Openwebtext corpus.
\newblock \url{http://Skylion007.github.io/OpenWebTextCorpus}, 2019.

\bibitem[Grano et~al.(2017)Grano, Di~Sorbo, Mercaldo, Visaggio, Canfora, and Panichella]{Grano2017SoftwareReviews}
Grano, G., Di~Sorbo, A., Mercaldo, F., Visaggio, C.~A., Canfora, G., and Panichella, S.
\newblock Software applications user reviews.
\newblock 2017.

\bibitem[Guo et~al.(2022)Guo, Karrer, Chaudhuri, and van~der Maaten]{guo2022bounding}
Guo, C., Karrer, B., Chaudhuri, K., and van~der Maaten, L.
\newblock Bounding training data reconstruction in private (deep) learning.
\newblock In \emph{International Conference on Machine Learning}, pp.\  8056--8071. PMLR, 2022.

\bibitem[Gupta \& Raskar(2018)Gupta and Raskar]{gupta2018distributed}
Gupta, O. and Raskar, R.
\newblock Distributed learning of deep neural network over multiple agents, 2018.

\bibitem[Gurulingappa et~al.(2012)Gurulingappa, Rajput, Roberts, Fluck, Hofmann-Apitius, and Toldo]{GURULINGAPPA2012885}
Gurulingappa, H., Rajput, A.~M., Roberts, A., Fluck, J., Hofmann-Apitius, M., and Toldo, L.
\newblock {Development of a benchmark corpus to support the automatic extraction of drug-related adverse effects from medical case reports}.
\newblock \emph{Journal of Biomedical Informatics}, 45:\penalty0 885--892, 2012.
\newblock \doi{https://doi.org/10.1016/j.jbi.2012.04.008}.
\newblock URL \url{http://www.sciencedirect.com/science/article/pii/S1532046412000615}.

\bibitem[Hao et~al.(2022)Hao, Li, Chen, Xing, Xu, and Zhang]{hao2022iron}
Hao, M., Li, H., Chen, H., Xing, P., Xu, G., and Zhang, T.
\newblock Iron: Private inference on transformers.
\newblock \emph{Advances in Neural Information Processing Systems}, 35:\penalty0 15718--15731, 2022.

\bibitem[Hay et~al.(2009)Hay, Li, Miklau, and Jensen]{hay2009accurate}
Hay, M., Li, C., Miklau, G., and Jensen, D.
\newblock Accurate estimation of the degree distribution of private networks.
\newblock In \emph{2009 Ninth IEEE International Conference on Data Mining}, pp.\  169--178. IEEE, 2009.

\bibitem[Hoory et~al.(2021)Hoory, Feder, Tendler, Cohen, Erell, Laish, Nakhost, Stemmer, Benjamini, Hassidim, and Matias]{inproceedings}
Hoory, S., Feder, A., Tendler, A., Cohen, A., Erell, S., Laish, I., Nakhost, H., Stemmer, U., Benjamini, A., Hassidim, A., and Matias, Y.
\newblock Learning and evaluating a differentially private pre-trained language model.
\newblock pp.\  21--29, 01 2021.
\newblock \doi{10.18653/v1/2021.privatenlp-1.3}.

\bibitem[Huang et~al.(2020)Huang, Song, Chen, Li, and Arora]{huang2020texthide}
Huang, Y., Song, Z., Chen, D., Li, K., and Arora, S.
\newblock Texthide: Tackling data privacy in language understanding tasks, 2020.

\bibitem[Kan et~al.(2023)Kan, Qiao, Yu, Peng, Gao, and Li]{kan2023protecting}
Kan, Z., Qiao, L., Yu, H., Peng, L., Gao, Y., and Li, D.
\newblock Protecting user privacy in remote conversational systems: A privacy-preserving framework based on text sanitization, 2023.

\bibitem[Kerrigan et~al.(2020)Kerrigan, Slack, and Tuyls]{kerrigan2020differentially}
Kerrigan, G., Slack, D., and Tuyls, J.
\newblock Differentially private language models benefit from public pre-training, 2020.

\bibitem[Kilgarriff(2001)]{kilgarriff2001comparing}
Kilgarriff, A.
\newblock Comparing corpora.
\newblock \emph{International journal of corpus linguistics}, 6\penalty0 (1):\penalty0 97--133, 2001.

\bibitem[Kotonya \& Toni(2020)Kotonya and Toni]{kotonya-toni-2020-explainable}
Kotonya, N. and Toni, F.
\newblock Explainable automated fact-checking for public health claims.
\newblock In \emph{Proceedings of the 2020 Conference on Empirical Methods in Natural Language Processing (EMNLP)}, pp.\  7740--7754, Online, November 2020. Association for Computational Linguistics.
\newblock URL \url{https://www.aclweb.org/anthology/2020.emnlp-main.623}.

\bibitem[Li et~al.(2017)Li, Su, Shen, Li, Cao, and Niu]{li2017dailydialog}
Li, Y., Su, H., Shen, X., Li, W., Cao, Z., and Niu, S.
\newblock Dailydialog: A manually labelled multi-turn dialogue dataset.
\newblock In \emph{Proceedings of The 8th International Joint Conference on Natural Language Processing (IJCNLP 2017)}, 2017.

\bibitem[Li et~al.(2023)Li, Tan, and Liu]{li2023privacypreserving}
Li, Y., Tan, Z., and Liu, Y.
\newblock Privacy-preserving prompt tuning for large language model services, 2023.

\bibitem[Liu \& Liu(2023)Liu and Liu]{liu2023llms}
Liu, X. and Liu, Z.
\newblock Llms can understand encrypted prompt: Towards privacy-computing friendly transformers.
\newblock \emph{arXiv preprint arXiv:2305.18396}, 2023.

\bibitem[Loshchilov \& Hutter(2018)Loshchilov and Hutter]{loshchilov2018decoupled}
Loshchilov, I. and Hutter, F.
\newblock Decoupled weight decay regularization.
\newblock In \emph{International Conference on Learning Representations}, 2018.

\bibitem[Lukas et~al.(2023)Lukas, Salem, Sim, Tople, Wutschitz, and Zanella-B{\'e}guelin]{lukas2023analyzing}
Lukas, N., Salem, A., Sim, R., Tople, S., Wutschitz, L., and Zanella-B{\'e}guelin, S.
\newblock Analyzing leakage of personally identifiable information in language models.
\newblock \emph{arXiv preprint arXiv:2302.00539}, 2023.

\bibitem[Malo et~al.(2014)Malo, Sinha, Korhonen, Wallenius, and Takala]{Malo2014GoodDO}
Malo, P., Sinha, A., Korhonen, P., Wallenius, J., and Takala, P.
\newblock {Good debt or bad debt: Detecting semantic orientations in economic texts}.
\newblock \emph{Journal of the Association for Information Science and Technology}, 65, 2014.

\bibitem[McAuley \& Leskovec(2013)McAuley and Leskovec]{McAuley2013HiddenFactors}
McAuley, J. and Leskovec, J.
\newblock Hidden factors and hidden topics: understanding rating dimensions with review text.
\newblock In \emph{Proceedings of the 7th ACM conference on Recommender systems}, pp.\  165--172, 2013.

\bibitem[Merity et~al.(2016)Merity, Xiong, Bradbury, and Socher]{merity2016pointer}
Merity, S., Xiong, C., Bradbury, J., and Socher, R.
\newblock Pointer sentinel mixture models, 2016.

\bibitem[Nasr et~al.(2020)Nasr, Shokri, and houmansadr]{nasr2020improving}
Nasr, M., Shokri, R., and houmansadr, A.
\newblock Improving deep learning with differential privacy using gradient encoding and denoising, 2020.

\bibitem[Nikolov et~al.(2013)Nikolov, Talwar, and Zhang]{Nikolov_2013}
Nikolov, A., Talwar, K., and Zhang, L.
\newblock The geometry of differential privacy.
\newblock In \emph{Proceedings of the forty-fifth annual {ACM} symposium on Theory of Computing}. {ACM}, jun 2013.
\newblock \doi{10.1145/2488608.2488652}.
\newblock URL \url{https://doi.org/10.1145%2F2488608.2488652}.

\bibitem[Pang \& Lee(2005)Pang and Lee]{Pang+Lee:05a}
Pang, B. and Lee, L.
\newblock Seeing stars: Exploiting class relationships for sentiment categorization with respect to rating scales.
\newblock In \emph{Proceedings of the ACL}, 2005.

\bibitem[Qu et~al.(2021{\natexlab{a}})Qu, Kong, Yang, Zhang, Bendersky, and Najork]{Qu_2021}
Qu, C., Kong, W., Yang, L., Zhang, M., Bendersky, M., and Najork, M.
\newblock Natural language understanding with privacy-preserving {BERT}.
\newblock In \emph{Proceedings of the 30th {ACM} International Conference on Information {\&} Knowledge Management}. {ACM}, oct 2021{\natexlab{a}}.
\newblock \doi{10.1145/3459637.3482281}.
\newblock URL \url{https://doi.org/10.1145%2F3459637.3482281}.

\bibitem[Qu et~al.(2021{\natexlab{b}})Qu, Kong, Yang, Zhang, Bendersky, and Najork]{qu2021natural}
Qu, C., Kong, W., Yang, L., Zhang, M., Bendersky, M., and Najork, M.
\newblock Natural language understanding with privacy-preserving bert.
\newblock In \emph{Proceedings of the 30th ACM International Conference on Information \& Knowledge Management}, pp.\  1488--1497, 2021{\natexlab{b}}.

\bibitem[Radford et~al.(2019)Radford, Wu, Child, Luan, Amodei, Sutskever, et~al.]{radford2019language}
Radford, A., Wu, J., Child, R., Luan, D., Amodei, D., Sutskever, I., et~al.
\newblock Language models are unsupervised multitask learners.
\newblock \emph{OpenAI blog}, 1\penalty0 (8):\penalty0 9, 2019.

\bibitem[Raffel et~al.(2020)Raffel, Shazeer, Roberts, Lee, Narang, Matena, Zhou, Li, and Liu]{raffel2020exploring}
Raffel, C., Shazeer, N., Roberts, A., Lee, K., Narang, S., Matena, M., Zhou, Y., Li, W., and Liu, P.~J.
\newblock Exploring the limits of transfer learning with a unified text-to-text transformer.
\newblock \emph{The Journal of Machine Learning Research}, 21\penalty0 (1):\penalty0 5485--5551, 2020.

\bibitem[Rajpurkar et~al.(2016)Rajpurkar, Zhang, Lopyrev, and Liang]{2016arXiv160605250R}
Rajpurkar, P., Zhang, J., Lopyrev, K., and Liang, P.
\newblock {SQuAD: 100,000+ Questions for Machine Comprehension of Text}.
\newblock \emph{arXiv e-prints}, pp.\  arXiv:1606.05250, 2016.

\bibitem[Sanh(2019)]{sanh2019distilbert}
Sanh, V.
\newblock Distilbert, a distilled version of bert: smaller, faster, cheaper and lighter.
\newblock In \emph{Proceedings of Thirty-third Conference on Neural Information Processing Systems (NIPS2019)}, 2019.

\bibitem[Shen et~al.(2023)Shen, Liu, Liu, Hong, Duan, Huang, Mao, Wu, and Wu]{shen2023split}
Shen, X., Liu, Y., Liu, H., Hong, J., Duan, B., Huang, Z., Mao, Y., Wu, Y., and Wu, D.
\newblock A split-and-privatize framework for large language model fine-tuning.
\newblock \emph{arXiv preprint arXiv:2312.15603}, 2023.

\bibitem[Sheng \& Uthus(2020)Sheng and Uthus]{sheng2020investigating}
Sheng, E. and Uthus, D.
\newblock {Investigating Societal Biases in a Poetry Composition System}, 2020.

\bibitem[Singh et~al.(2019)Singh, Vepakomma, Gupta, and Raskar]{singh2019detailed}
Singh, A., Vepakomma, P., Gupta, O., and Raskar, R.
\newblock Detailed comparison of communication efficiency of split learning and federated learning, 2019.

\bibitem[Vashisth \& Meehan(2020)Vashisth and Meehan]{vashisth2020gender}
Vashisth, P. and Meehan, K.
\newblock Gender classification using twitter text data.
\newblock In \emph{2020 31st Irish Signals and Systems Conference (ISSC)}, pp.\  1--6. IEEE, 2020.

\bibitem[Vaswani et~al.(2017)Vaswani, Shazeer, Parmar, Uszkoreit, Jones, Gomez, Kaiser, and Polosukhin]{vaswani2017attention}
Vaswani, A., Shazeer, N., Parmar, N., Uszkoreit, J., Jones, L., Gomez, A.~N., Kaiser, {\L}., and Polosukhin, I.
\newblock Attention is all you need.
\newblock \emph{Advances in neural information processing systems}, 30, 2017.

\bibitem[Vepakomma et~al.(2018)Vepakomma, Gupta, Swedish, and Raskar]{vepakomma2018split}
Vepakomma, P., Gupta, O., Swedish, T., and Raskar, R.
\newblock Split learning for health: Distributed deep learning without sharing raw patient data, 2018.

\bibitem[Wang et~al.(2019)Wang, Gu, Boedihardjo, Barekat, and Osher]{wang2019dplssgd}
Wang, B., Gu, Q., Boedihardjo, M., Barekat, F., and Osher, S.~J.
\newblock Dp-lssgd: A stochastic optimization method to lift the utility in privacy-preserving erm, 2019.

\bibitem[Warstadt et~al.(2019)Warstadt, Singh, and Bowman]{warstadt2019neural}
Warstadt, A., Singh, A., and Bowman, S.~R.
\newblock Neural network acceptability judgments.
\newblock \emph{Transactions of the Association for Computational Linguistics}, 7:\penalty0 625--641, 2019.

\bibitem[Wu et~al.(2017)Wu, Li, Kumar, Chaudhuri, Jha, and Naughton]{wu2017bolton}
Wu, X., Li, F., Kumar, A., Chaudhuri, K., Jha, S., and Naughton, J.~F.
\newblock Bolt-on differential privacy for scalable stochastic gradient descent-based analytics, 2017.

\bibitem[Xu et~al.(2022)Xu, Dutta, Liu, Li, and Kalnis]{xu2022denoising}
Xu, H., Dutta, A., Liu, W., Li, X., and Kalnis, P.
\newblock Denoising differential privacy in split learning.
\newblock 2022.

\bibitem[Yang et~al.(2022)Yang, Sun, Yao, Xie, and Wang]{yang2022differentially}
Yang, X., Sun, J., Yao, Y., Xie, J., and Wang, C.
\newblock Differentially private label protection in split learning.
\newblock \emph{arXiv preprint arXiv:2203.02073}, 2022.

\bibitem[Ye et~al.(2024)Ye, Wang, Chai, Li, Li, Xu, Du, Wang, and Chen]{ye2024openfedllm}
Ye, R., Wang, W., Chai, J., Li, D., Li, Z., Xu, Y., Du, Y., Wang, Y., and Chen, S.
\newblock Openfedllm: Training large language models on decentralized private data via federated learning.
\newblock \emph{arXiv preprint arXiv:2402.06954}, 2024.

\bibitem[Yu et~al.(2021)Yu, Naik, Backurs, Gopi, Inan, Kamath, Kulkarni, Lee, Manoel, Wutschitz, et~al.]{yu2021differentially}
Yu, D., Naik, S., Backurs, A., Gopi, S., Inan, H.~A., Kamath, G., Kulkarni, J., Lee, Y.~T., Manoel, A., Wutschitz, L., et~al.
\newblock Differentially private fine-tuning of language models.
\newblock \emph{arXiv preprint arXiv:2110.06500}, 2021.

\bibitem[Zhang et~al.(2015)Zhang, Zhao, and LeCun]{Zhang2015CharacterlevelCN}
Zhang, X., Zhao, J.~J., and LeCun, Y.
\newblock {Character-level Convolutional Networks for Text Classification}.
\newblock In \emph{NIPS}, 2015.

\end{thebibliography}
\bibliographystyle{icml2024}

\newpage
\appendix
\onecolumn
\section{Appendix}



\subsection{Proof of Theorem \ref{thm:etadp}}
\label{app:etadp}
To prove the theorem, we first demonstrate that the noise follows Laplacian distribution:
\begin{lemma}
By sampling \(l\) from Gamma distribution \(\Gamma(d, 1/\eta)\), and \(\boldsymbol{v}\) from the unit ball \(B^d\), the vector $z=l\boldsymbol{v}$ can be released as $d$-dimensional Laplacian $z\sim c \exp(-\eta \|z\|)$.
\end{lemma}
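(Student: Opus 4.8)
The plan is to compute the probability density of $z = l\boldsymbol{v}$ directly by passing to spherical (polar) coordinates in $\mathbb{R}^d$, and to verify that it equals $c\exp(-\eta\|z\|)$ for a suitable normalizing constant $c$. The essential observation is that the target Laplacian density is isotropic (it depends on $z$ only through $\|z\|$), so it is natural to separate the radial magnitude $l = \|z\|$ from the direction $\boldsymbol{v} = z/\|z\| \in S^{d-1}$, which is exactly the decomposition used to generate $z$.

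First I would recall the polar change of variables: writing $z = l\boldsymbol{v}$ with $l = \|z\| > 0$ and $\boldsymbol{v}$ a unit vector, the Lebesgue volume element factors as $dz = l^{d-1}\, dl\, d\sigma(\boldsymbol{v})$, where $d\sigma$ is the surface measure on the unit sphere, whose total mass is $S_{d-1} = 2\pi^{d/2}/\Gamma(d/2)$. Next I would write down the law of the generated $z$: since $\boldsymbol{v}$ is drawn uniformly (density $1/S_{d-1}$ with respect to $d\sigma$) and $l \sim \Gamma(d, 1/\eta)$ independently, with density $g(l) = \frac{\eta^d}{\Gamma(d)}\, l^{d-1} e^{-\eta l}$ on $(0,\infty)$, the joint law in the coordinates $(l, \boldsymbol{v})$ is $g(l)\,\frac{1}{S_{d-1}}\, dl\, d\sigma$.

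The decisive step is to equate this joint law with $f_Z(z)\,dz = f_Z(z)\, l^{d-1}\, dl\, d\sigma$ and solve for the Cartesian density $f_Z$. This gives $f_Z(z) = g(l)/(S_{d-1}\, l^{d-1})$, and here the factor $l^{d-1}$ coming from the Gamma shape parameter $d$ cancels exactly against the Jacobian $l^{d-1}$. What remains is $f_Z(z) = \frac{\eta^d}{S_{d-1}\,\Gamma(d)}\, e^{-\eta l} = c\exp(-\eta\|z\|)$ with $c = \eta^d/(S_{d-1}\,\Gamma(d))$, which is precisely the $d$-dimensional Laplacian claimed; normalization requires no separate check since both source distributions are already probability distributions.

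I expect the main difficulty to be bookkeeping rather than conceptual. The exact cancellation hinges on the Gamma shape parameter equaling the dimension $d$: any other shape would leave a residual power of $l$ and destroy the pure exponential form, so I would emphasize this matching. I would also flag a point of notation, namely that the direction $\boldsymbol{v}$ must be sampled uniformly from the unit sphere $S^{d-1}$ (the boundary of $B^d$) rather than from the solid ball, for the radial decomposition to be valid; under this reading of the sampling procedure the argument closes cleanly.
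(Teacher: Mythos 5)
Your proof is correct and takes exactly the route the paper indicates: the paper's own proof is a one-line appeal to the spherical change of variables (deferring the computation to Lemma 4 of Fernandes et al., 2019), and you have simply carried that computation out in full, with the cancellation of the Jacobian factor $l^{d-1}$ against the $l^{d-1}$ in the Gamma($d$) density being the decisive step, just as you emphasize. Your closing caveat is also well taken: the statement's phrase ``uniformly sampled from the unit ball $B^d$'' must indeed be read as uniform on the sphere $S^{d-1}$ for the shape parameter $d$ to be correct (uniform on the solid ball would instead require shape $d+1$), a looseness present in the paper's wording rather than in your argument.
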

\begin{proof}
The proof follows by changing variables to spherical coordinates and showing that the Laplacian can be expressed as the product of $\boldsymbol{v}$ and $l$. See, for instance, Lemma 4 in \cite{fernandes2019generalised}.
\end{proof}
We can now proceed to the proof of Theorem \ref{thm:etadp}.
\begin{proof}
Plugging in the probability density function of $z$, it holds that:
\begin{equation}
    \frac{P(M(\boldsymbol{x}) = \boldsymbol{y})}{P(M(\boldsymbol{x}') = \boldsymbol{y})} = \frac{P(z = \boldsymbol{y}-\boldsymbol{x})}{P(z = \boldsymbol{y}-\boldsymbol{x}')} = \exp (\eta(\|\boldsymbol{y}-\boldsymbol{x}'\|-\|\boldsymbol{y}-\boldsymbol{x}\|)) \leq \exp (\eta(\|\boldsymbol{x}'-\boldsymbol{x}\|))
\end{equation}
, for any $\eta > 0$.

Then the mechanism $M'$ achieves $\eta d_{\chi}$-DP based on post-processing property. 
\end{proof}
\subsection{Denoise Model}
\subsubsection{Proof of Proposition \ref{prop:mseserver}}
\label{app:mseserve}
We begin with the below Lemma to bound the relative entropy of $\boldsymbol{\hat{y}}$ and $\boldsymbol{\hat{y}'}$.
\begin{lemma}
    Let $\boldsymbol{\hat{y}}\in \mathbb{R}^k$ be the noisy vector described in Proposition \ref{prop:mseserver}. Denote $F: \mathbb{R}^{n\times d}\rightarrow \mathbb{R}^k$ as the transformation from privatized token representations to $\boldsymbol{\hat{y}}$. It holds that:
    \begin{equation}
 \mathbb{D}_2(F(\boldsymbol{\hat{y}})\|F(\boldsymbol{\hat{y}}')) \leq \mathbb{D}_{\infty}(F(\boldsymbol{\hat{y}})\|F(\boldsymbol{\hat{y}}'))) \leq \eta B_x, \forall \boldsymbol{\hat{y}}=M'(\textbf{x}), \boldsymbol{\hat{y}}'=M'(\textbf{x}')
    \end{equation}
    where $\mathbb{D}_i(\cdot)$ denotes the rényi divergence of order $i$.
\end{lemma}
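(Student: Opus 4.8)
The plan is to derive both inequalities from two standard properties of the R\'enyi divergence together with the $\eta d_\chi$-privacy guarantee already established for $M'$ in Theorem~\ref{thm:etadp}. The two properties are: (i) monotonicity of $\mathbb{D}_\alpha$ in the order $\alpha$, so that $\mathbb{D}_2(P\|Q) \leq \mathbb{D}_\infty(P\|Q)$ for any pair of distributions $P,Q$; and (ii) the data-processing inequality, which states that post-composing both arguments with any (deterministic or randomized) map cannot increase the divergence. With these in hand, the right-hand bound $\mathbb{D}_\infty \leq \eta B_x$ carries all the content, while the left inequality $\mathbb{D}_2 \leq \mathbb{D}_\infty$ is immediate from (i).

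First I would make explicit that the max-divergence coincides with the logarithm of the worst-case likelihood ratio: $\mathbb{D}_\infty(M'(\boldsymbol{x}) \| M'(\boldsymbol{x}')) = \sup_{y} \log \frac{P(M'(\boldsymbol{x}) = y)}{P(M'(\boldsymbol{x}') = y)}$. By Theorem~\ref{thm:etadp}, $M'$ satisfies $\eta d_\chi$-privacy, so by definition this ratio is bounded by $e^{\eta d_\chi(\boldsymbol{x},\boldsymbol{x}')} = e^{\eta\|\boldsymbol{x}-\boldsymbol{x}'\|}$ for every $y$. Taking logarithms and the supremum yields $\mathbb{D}_\infty(M'(\boldsymbol{x}) \| M'(\boldsymbol{x}')) \leq \eta\|\boldsymbol{x}-\boldsymbol{x}'\|$, and invoking the boundedness assumption $\|\boldsymbol{x}-\boldsymbol{x}'\| \leq B_x$ from Proposition~\ref{prop:mseserver} gives $\mathbb{D}_\infty(M'(\boldsymbol{x}) \| M'(\boldsymbol{x}')) \leq \eta B_x$.

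Next I would apply the data-processing inequality to the transformation $F$ that maps the privatized token representation $M'(\boldsymbol{x})$ to the returned noisy vector. Since $F$ is a fixed post-privatization map applied identically to both arguments, property (ii) gives $\mathbb{D}_\infty(F(M'(\boldsymbol{x})) \| F(M'(\boldsymbol{x}'))) \leq \mathbb{D}_\infty(M'(\boldsymbol{x}) \| M'(\boldsymbol{x}')) \leq \eta B_x$. Combining this with the monotonicity bound $\mathbb{D}_2 \leq \mathbb{D}_\infty$ evaluated at the distributions of $F(M'(\boldsymbol{x}))$ and $F(M'(\boldsymbol{x}'))$ chains together the full statement.

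The main obstacle I anticipate is not the divergence algebra but the measure-theoretic care required at the step identifying the worst-case likelihood ratio with $\mathbb{D}_\infty$: the clipping inside $M'$ is deterministic and concentrates the mass outside the ball of radius $C_{x_t}$ onto its boundary sphere, so the laws of $M'(\boldsymbol{x})$ and $M'(\boldsymbol{x}')$ carry a singular component and their ratio must be read with respect to a common dominating measure rather than a Lebesgue density. Fortunately the clipping is the same deterministic map for every input, so it can be absorbed into $F$ (or handled by post-processing exactly as in the proof of Theorem~\ref{thm:etadp}). I would therefore take the likelihood-ratio bound directly from the already-proven $\eta d_\chi$-privacy of $M'$, stated in terms of measurable events, rather than re-deriving it from densities, which sidesteps the singular-mass issue entirely.
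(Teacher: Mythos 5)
Your proof is correct and takes essentially the same route as the paper: the paper's one-line proof invokes precisely the post-processing (data-processing) property you apply to $F$, combined implicitly with the monotonicity of the R\'enyi divergence in its order and the $\eta d_\chi$-privacy bound $\mathbb{D}_\infty(M'(\boldsymbol{x})\|M'(\boldsymbol{x}')) \leq \eta\|\boldsymbol{x}-\boldsymbol{x}'\| \leq \eta B_x$ furnished by Theorem~\ref{thm:etadp} and the boundedness assumption of Proposition~\ref{prop:mseserver}. Your extra measure-theoretic care about the singular mass created by clipping is a sound refinement (and is handled, as you note, exactly by absorbing the clipping into post-processing), but it does not change the argument.
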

\begin{proof}
    The proof directly follows by applying post-processing properties on the relative entropy.
\end{proof}

Then we proceed to the proof of Proposition \ref{prop:mseserver}. Let $h=D_s(\hat{\boldsymbol{y}})$ For an unbiased denoise model, the MSE is lower bounded by:
    \begin{equation}
    \mathbb{E} [\|D_s(\hat{\boldsymbol{y}}) - \boldsymbol{y}\|/k] \geq
 \sum_i Var\left(D_s(\hat{\boldsymbol{y}})_i\right)
    \end{equation}
    Then we examine the bound of $Var\left(D_s(\hat{\boldsymbol{y}})_i\right)$. Denote $h=D_s(\hat{\boldsymbol{y}})$ as the denoised output and $\mu(\boldsymbol{y})$ as the expectation of $h$. From Hammersley-Chapman-Robbins Bound, we have:
    \begin{equation}
    \begin{gathered}
        Var\left(D_s(\hat{\boldsymbol{y}})_i\right) \geq \frac{(\mu(\boldsymbol{y}+e_i)_i-\mu(\boldsymbol{y})_i)^2}{\mathbb{E}[(p(h;\boldsymbol{y}+e_i)/p(h;\boldsymbol{y})-1)^2]} \geq \frac{(\mu(\boldsymbol{y}+e_i)_i-\mu(\boldsymbol{y})_i)^2}{e^{\eta B_x}-1}  \\
        \stackrel{(a)}{\geq} \frac{\sum_{i=1}^d \rm{diam}_i (\mathcal{Y})^2/4k}{e^{\eta B_x}-1}
    \end{gathered}
    \end{equation}
    where $\mathbb{E}[\cdot]$ is the expectation taken over $p(h;\boldsymbol{y})$, $p(h;\boldsymbol{y})$ is the density function of $h$ given $\boldsymbol{y}$, and $e_i$ is the standard basis vector with ith coordinate equal to 1. (a) follows from the unbias property of the denoise model (see, for example, Theorem A.1 in \cite{guo2022bounding}).

\subsubsection{Figure of Denoise Architecture}
\begin{figure*}[!htbp]
	\centering
	\includegraphics[width=5 in]{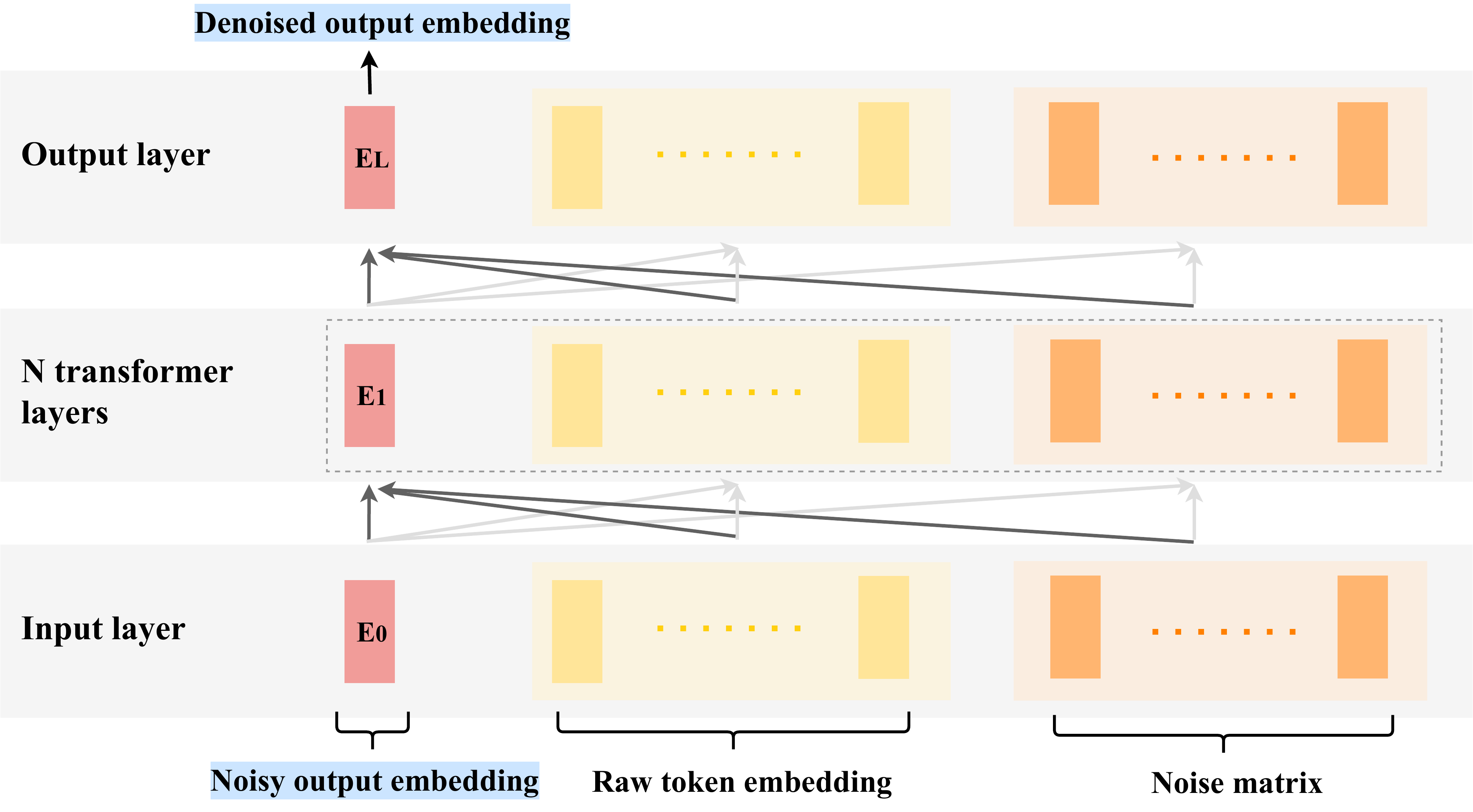}
 	\caption{Architecture of denoise model.The denoise model accepts the noised output embedding from the LLM model, in conjunction with the raw token embedding and noise matrix, as input. Through multiple transformers, the model learns to denoise, ultimately producing a denoised output embedding to augment the performance of downstream tasks.}
	\label{denoisemod}
\end{figure*}

\subsection{Details of Datasets}
\label{app:dataset}
\subsubsection{Dataset to Train Denoise Model}
\textbf{SQuAD}: The Stanford Question Answering Dataset (SQuAD) is a reading comprehension dataset, with questions posed by crowdworkers based on a set of Wikipedia articles. The answer to every question is a segment of text from the corresponding article, or the question might be unanswerable \cite{2016arXiv160605250R}.

\textbf{AG News}: This dataset contains more than 1 million news articles, categorizing text into classes like sports, business, and tech \cite{Zhang2015CharacterlevelCN}.

\textbf{Financial Phrasebank}: Comprising sentiments in the financial domain, specifically with sentences where all annotators concur. It is primarily for 3-class sentiment analysis \cite{Malo2014GoodDO}.

\textbf{Banking77}: It contains online banking queries annotated with their corresponding intents, focusing on fine-grained single-domain intent detection \cite{Casanueva2020}.

\textbf{Health Fact}: A comprehensive dataset for explainable automated fact-checking of public health claims \cite{kotonya-toni-2020-explainable}.

\textbf{Poem Sentiment}: A dataset for 4-class sentiment analysis on poem verses from Project Gutenberg \cite{sheng2020investigating}.

\textbf{Tweet Eval - Sentiment}: Containing tweets for sentiment analysis \cite{barbieri2020tweeteval}.

\textbf{Tweet Eval - Emotion}: Comprising tweets labeled with specific emotions \cite{barbieri2020tweeteval}.

\textbf{Tweet Eval - Hate}: A dataset to classify tweets containing hate speech \cite{barbieri2020tweeteval}.

\textbf{Tweet Eval - Offensive}: A dataset for classifying tweets deemed offensive \cite{barbieri2020tweeteval}.

\textbf{ADE Corpus V2}: A dataset for classification if a sentence discusses Adverse Drug Reaction or not. This dataset also extracts the relation between Adverse Drug Event and Drug \cite{GURULINGAPPA2012885}.

\textbf{Hate Speech18}: Dedicated to detecting hate speech in texts extracted from Stormfront, a white supremacist forum \cite{gibert2018hate}.

\textbf{SMS Spam}: Comprising SMS labeled texts, this dataset is utilized to identify spam messages \cite{Almeida2011SpamFiltering}.

\textbf{Daily Dialog}: A high-quality multi-turn dialog dataset contains dialogues derived from daily conversations \cite{li2017dailydialog}.

\textbf{Yelp Review Full}: Comprising reviews from Yelp for text classification. This dataset is extracted from the Yelp Dataset Challenge 2015 data \cite{Zhang2015CharacterlevelCN}.

\textbf{App Reviews}: A dataset of user reviews of Android applications belonging to different categories \cite{Grano2017SoftwareReviews}.

\textbf{Amazon Polarity}: Contains reviews from Amazon, including product and user information, ratings, and a text review \cite{McAuley2013HiddenFactors, Zhang2015CharacterlevelCN}.

\textbf{Rotten Tomatoes}: A movie review dataset used for sentiment analysis. This dataset comprises reviews from the Rotten Tomatoes website \citep{Pang+Lee:05a}.

\textbf{Wikitext}: A collection of over 100 million tokens extracted from the set of verified Good and Featured articles on Wikipedia. Used for language modeling and other NLP tasks \citep{merity2016pointer}.

\textbf{OpenWebText}: An open-source collection of web articles, modeled after the dataset used in the original "GPT" work \citep{Gokaslan2019OpenWeb}.

\subsubsection{Dataset for Downstream Tasks}

\textbf{QQP}: The Quora Question Pairs2 dataset consists of question pairs to determine semantic equivalence~\cite{Chen2018}.

\textbf{RTE}: The Recognizing Textual Entailment (RTE) datasets aggregates multiple Recognizing Textual Entailment challenges, determining if texts entail each other. It combines data from several RTE challenges~\cite{Dagan2006, BarHaim2006, Giampiccolo2007, Bentivogli2009}.

\textbf{MPRC}: The Microsoft Research Paraphrase Corpus (MRPC) contains sentence pairs extracted from online news sources, with labels to predict the equivalence of the sentences in the pair~\cite{dolan2005automatically}.

\textbf{CoLA}: The Corpus of Linguistic Acceptability (CoLA) consists of sentences from books and journal articles on linguistic theory with annotations for acceptability (grammaticality)~\cite{warstadt2019neural}. 

\subsection{Evaluation of Downstream Tasks}
\label{app:dstask}
We follow the steps below to conduct evaluation on downstream tasks:
\begin{itemize}
    \item Obtain the embeddings of text in training and testing datasets via privacy-preserving LLM inference framework.
    \item Train a classification model on the privatized (denoised) embeddings from training set. 
    \item Test the performance of classification model on the privatized (denoised) embeddings from testing set.
\end{itemize}

\subsection{Specifications on Experimental Setting}
\label{app:hyperparam}

\subsubsection{Experimental Settings}
All the experiments are performed on a virtual server with Intel Xeon Platinum 8336C CPU and NVIDIA RTX A6000 GPU (CUDA version 12.2). We utilize Python 3.9 as the programming language and pytorch 2.2.2 as the underlying framework.

\subsubsection{Hyperparameters of Denoise Model}
The hyperparameters of denoise model are represented as followed:
\begin{itemize}
    \item $d_{model}$: Dimension of input embeddings and hidden states.
    \item $d_{ff}$: Hidden dimension in the feed forward network.
    \item $d_{kv}$: Dimension of each head in the multi-head attention layer.
    \item $n_{head}$: Number of heads in the multi-head attention layer.
    \item $L$: Number of layers.
\end{itemize}
Table \ref{tab:hyperdenoise} lists the hyperparameters for each denoise model.

\begin{table}[!htbp]
\begin{center}
\caption{Hyperparameters of denoise models.}
\label{tab:hyperdenoise}
\begin{tabular}{@{}lccccc@{}}
\toprule
   & $d_{model}$ & $d_{ff}$   & $d_{kv}$ & $n_{head}$  & $L$  \\ \midrule
DistillBert  &  768  & 768  &   240   &   6   &  3  \\
Bert Base   &  768   &  1024  &   240  &   8    & 6 \\
Bert Large  &  1024  &  1024    &  256   &   8   &  6 \\ 
T5 Small &  512  &   512  &  240   &   6    &  6  \\
T5 Base &  768   &   768  &  256  &   8   &  6  \\
T5 Large &  1024  &  1024  &  256  &    8   &   6 \\
GPT2 Small &  768   &  768  &   240   &   8    & 6   \\
GPT2 Medium &  1024   &  1024 &  256   &   8   &  6  \\
GPT2 Large &  1280  &  1280  &  256  &    8   &   6 \\
GPT2 XLarge &  1600   &  1600  &  256  &   10   &   6 \\
 \bottomrule
\end{tabular}
\end{center}
\end{table}

\subsubsection{Training of Denoise Model}
We take the following measures to train the denoise model adapting to varying $\eta$ levels:
\begin{itemize}
    \item Divide the privacy budget $\eta$ into three groups. For each model, we partition $\eta$ into three groups according to the correlation coefficient between the plain and privatized token embeddings. 
    \item Train separate denoise models for each group of $\eta$. For each partition, we sample the noises from two representative $\eta$ levels as training inputs to the denoise model.
    \item Perform denoising using the denoise model corresponding to the appropriate group. During the inference stage, users specify the desired levels $\eta$ and retrieve the denoise model from the corresponding partition.
\end{itemize}

During pre-training, we sampled up to 5000 examples for each of the 20 dataset, resulting in nearly 100,000 samples. Each model is trained for 2 epochs. In Table \ref{tab:traintime}, we compare the computation time to train the denoising model on a single A6000, and that to train the original large language model \cite{sanh2019distilbert, devlin2018bert, raffel2020exploring, radford2019language}. It can be observed that the computation cost to train the denoiser is acceptable, especially when it is compared with the resources used to pre-train the original language model.
\begin{table}[!htbp]
\begin{center}
\caption{Training time of denoise model and orignal language model.}
\label{tab:traintime}
\begin{tabular}{@{}llll@{}}
\toprule
   &   & Denoiser  & Base Model  \\ \midrule
\multirow{3}{*}{BERT} & DistillBert  &  3.3 hours on single A6000 & 24-48 hours on multiple TPUs or high-end GPUs like V100 \\
& Bert Base   &  5.7 hours on single A6000  &  4 days on 4 Cloud TPUs (16 TPU chips)  \\
& Bert Large  & 9.9 hours on single A6000 &  	4 days on 16 Cloud TPUs (64 TPU chips total) \\ 
\hline
\multirow{3}{*}{T5} &T5 Small & 3.1 hours on single A6000 &  Several days on a Cloud TPU v3-8  \\
& T5 Base & 6.0 hours on single A6000 & One week on a Cloud TPU v3-8 \\
& T5 Large &  11.2 hours on single A6000 & Two weeks on a Cloud TPU v3-8 \\
\hline
\multirow{4}{*}{GPT2} &GPT2 Small & 5.8 hours on single A6000 & A week using several NVIDIA V100 GPUs \\
&GPT2 Medium & 10.3 hours on single A6000 &  Two weeks with multiple NVIDIA V100 GPUs\\
&GPT2 Large & 17.2 hours on single A6000 & 	3-4 weeks with multiple NVIDIA V100 GPUs \\
&GPT2 XLarge & 30.3 hours on single A6000 & 1-2 months on a substantial number of NVIDIA V100 GPUs
 \\
 \bottomrule
\end{tabular}
\end{center}
\end{table}

\subsubsection{Training of Downstream Classification Model}
For downstream classification task, we employ a simple neural network model composed of two fully connected layers and two rectified linear unit (ReLU) activation functions. We set the hidden dimension to be the same as input dimension. Regarding pairwise similarity and textual entailment tasks, we concatenate the embeddings of both sentences into one vector which is then passed as input to the classification model.

\subsubsection{Specification of Baseline}
Below we list the experimental specifications of the three baseline methods. All approaches utilize the $d_{\chi}$-privacy definition with $L_2$ distances $d(x,x')=\|x-x'\|$. The maximum sequence length is set to 512 in SnD and baselines.
\begin{itemize}
    \item TokEmbPriv: the user perturbed the token embedding with $d_{\chi}$-privacy before uploading to the server. The privatization is performed by adding random noise $Z$ sampled from a $d$-dimensional distribution with density function $p(Z)\sim \exp (-\eta \|Z\|)$.
    \item Text2Text: the user transforms the plain token sequence into a privatized token sequence. The tokens is first mapped to a embedding space using the embedding model given by the token representation layer $E: \mathcal{V} \rightarrow \mathbb{R}^d$, where $\mathcal{V}$ denotes set of vocabulary. The token embeddings are privatized by adding noises drawn from the exponential distribution described in TokEmbPriv. Finally, we identify the token with representation closest to the perturbed embeddings measured by Euclidean distance.
    \item RAPT: We adopt the prompt tuning method with prompt length set to 150. We finetune the model for 4 epochs and set the batch size to 15. The vocabulary size of the reconstruct head and plain token size are set to 7,630 and 40, respectively. We employ AdamW \cite{loshchilov2018decoupled} as the optimizer and set the learning rate to 6e-5. 
\end{itemize}

\subsection{Estimation of Mutual Information}
\label{app:mi}
Let $X$, $\tilde{X}$ denote the original and privatized token embedding respectively. Then mutual information is defined as: 
\begin{equation}
    I(X;\tilde{X}) = H(\tilde{X}) - H(X) = H(X) - H(X|\tilde{X})
\end{equation}
, where $H(\cdot)$ is the entropy of the variable. $I(X;\tilde{X})$ ranges from 0 to $ \infty $, and a smaller value indicates that two variables share less information about each other. If $I(X;\tilde{X})=0$, then $X$ and $\tilde{X}$ are independent.

According to data-processing inequality, the norm clipping step would not increase the mutual information. Therefore, in the following, we focus on the case $\tilde{X}=X+Z$, where $Z$ is sampled from the Laplacian distribution. Given that $X$ and $Z$ are independent, the mutual information can be simplified as
\begin{equation}
    I(X;\tilde{X}) = H(\tilde{X}) - H(Z)
\end{equation}

To estimate the mutual information from empirical distribution, we follow Kozachenko and Leonenko’s method to compute the entropy from the distance to the k-nearest neighbor: 
\begin{equation}
\hat{H}(X) = \psi(N) - \psi(k) + \log c_d + \frac{d}{N} \sum_{i=1}^{N} \log \epsilon(i)
\end{equation}
, where $N$ is the sample size, $\psi(\cdot)$ is the digamma function, $c_d$ is the volume of d-dimensional unit ball, and $\epsilon(i)$ is the distance of the $i^{th}$ sample to its k-nearest neighbor.

By applying the same sample size and k-nearest neighbor for both $X$ and $Z$, we can cancel out the first three terms and formulate the estimation for mutual information as: 
\begin{equation}
    \hat{I}(X;\tilde{X}) = \frac{d}{N} \sum_{i=1}^{N} \log \epsilon_{\tilde{X}}(i) - \frac{d}{N} \sum_{i=1}^{N} \log \epsilon_{Z}(i)
\end{equation}
 Thus, the mutual information can be computed by the distance to the k-nearest neighbor for privatized embedding $\tilde{X}$ and noise $Z$.

\subsection{Attribute Inference Attack}
\label{app:attrinfer}
Figure \ref{fig:attrattackacc} presents the accuracies of inference attack on the tweet review dataset. In the case of Bert models, the attack accuracies are around 0.5 when $\eta\leq 1500$. As for GPT2 small, the attack performance gradually increases as $\eta$ reaches 500, whereas for GPT2 medium and large, the attack accuracies remain consistently low when $\eta\leq 1500$.  For T5 models, the attacker's inference capability starts to grow for $\eta \geq 600$.

\begin{figure}[!htbp]
\begin{center}
 \begin{minipage}{0.3\linewidth}
 \centering
    \includegraphics[width=\linewidth]{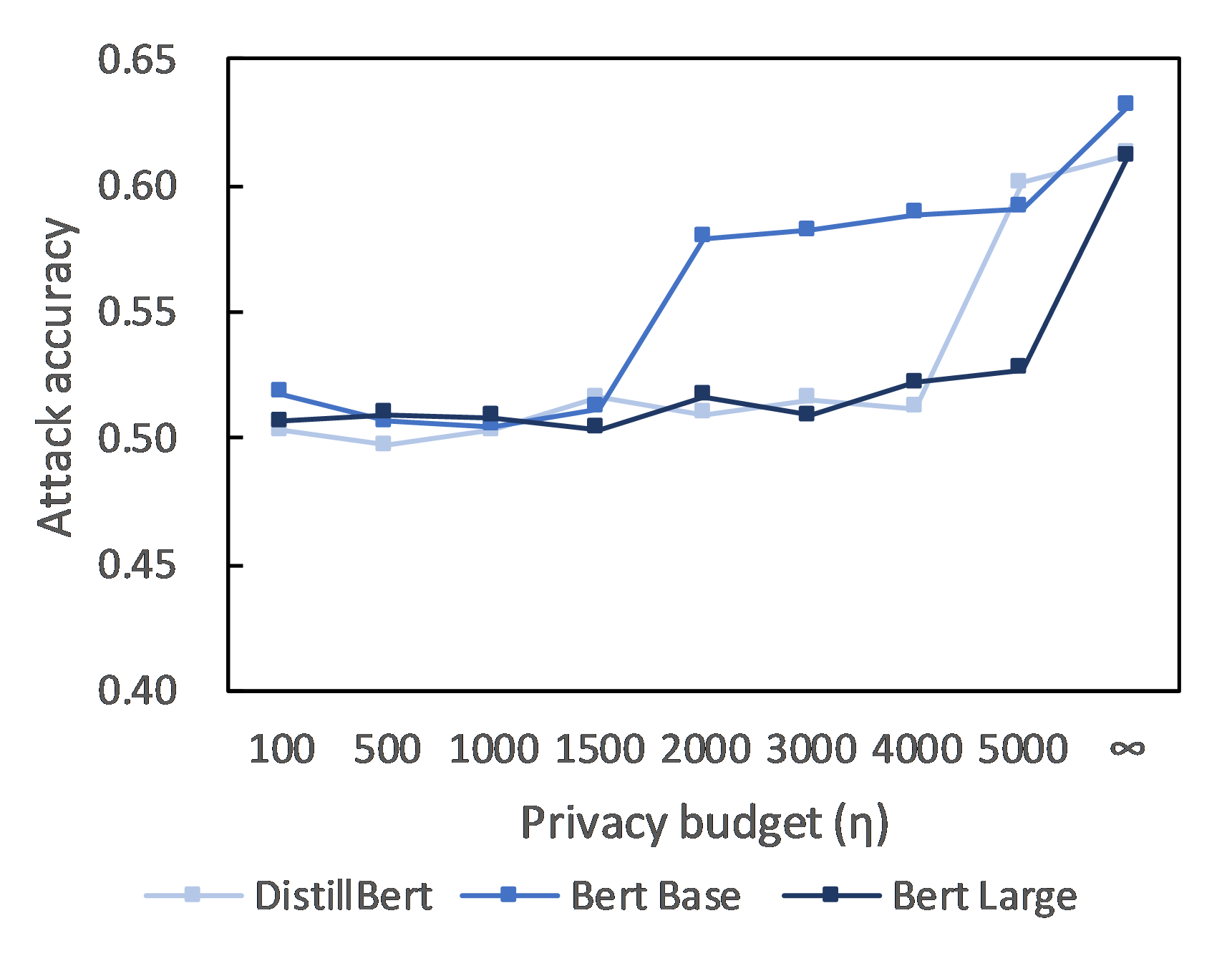}
    \subcaption{Bert}\label{infer_bert_attack}
\end{minipage}
    \hfill
\begin{minipage}{0.3\linewidth}
\centering
    \includegraphics[width=\linewidth]{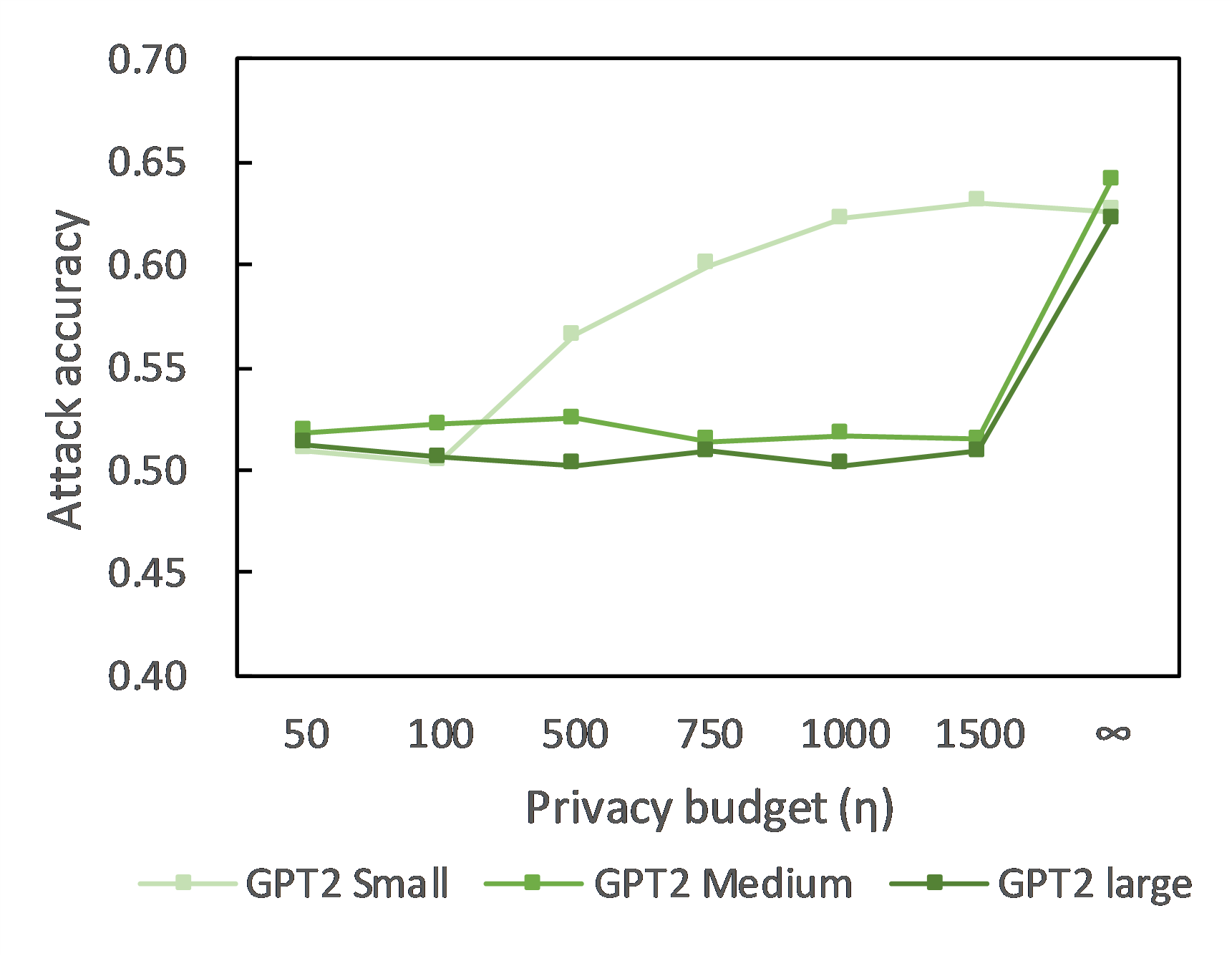}
    \subcaption{GPT}\label{infer_gpt_attack}
\end{minipage}
     \hfill
\begin{minipage}{0.3\linewidth}
\centering
    \includegraphics[width=\linewidth]{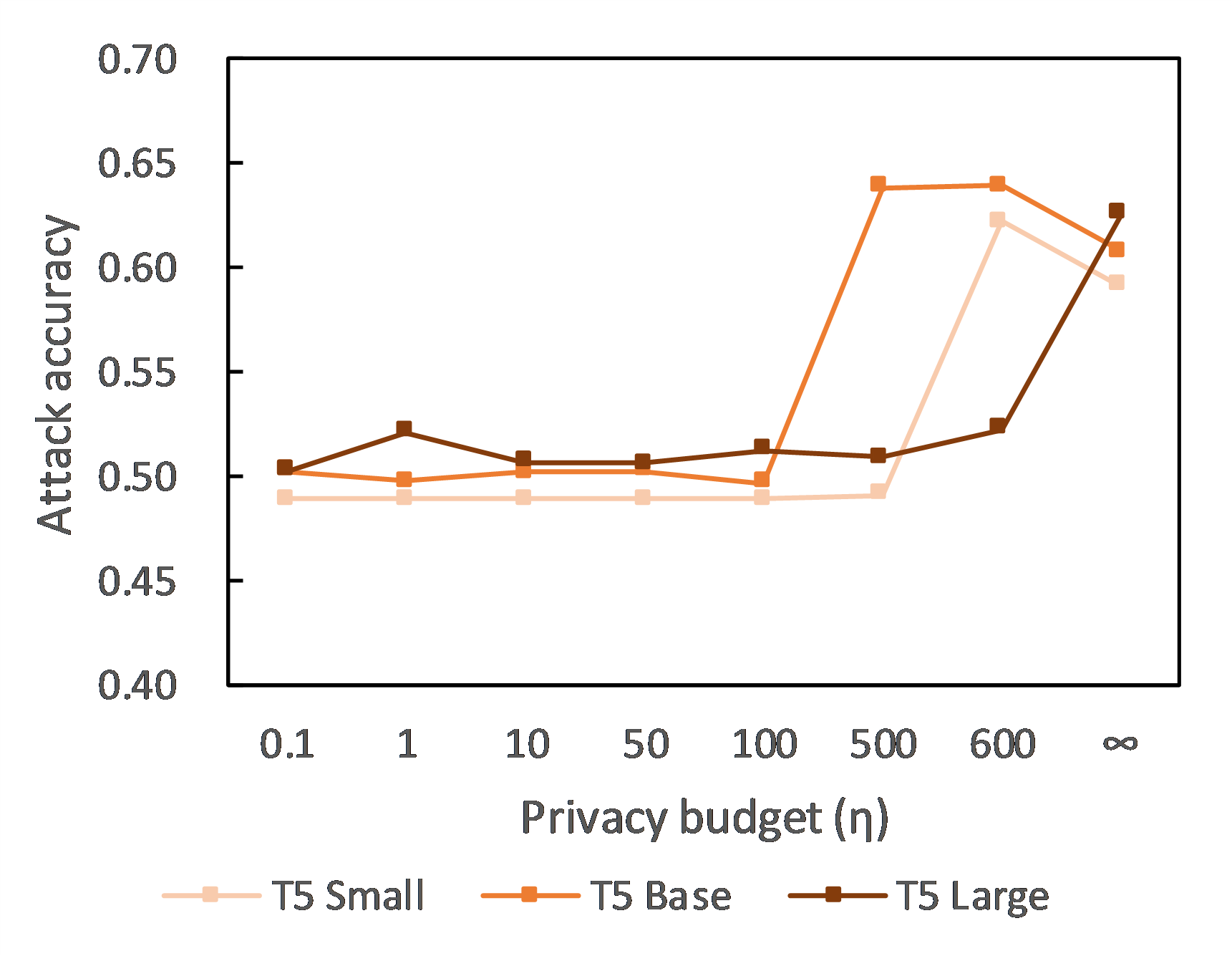}
    \subcaption{T5}\label{infer_t5_attack}
\end{minipage}
\caption{Attribute inference attack accuracy under varying $\eta$.}
\label{fig:attrattackacc}
\end{center}
\end{figure}

\subsection{Finetuning Budget with Model Update}
\label{app:modelupdate}
\begin{table}[!htbp]
\caption{AUCs under various denoise and base model finetuning samples. \# of denoise model finetune samples denotes how many samples we selected from the mixed dataset to finetune the denoise model, and \# of base model finetune samples denotes how many samples we select from sst2 to finetune the base model.}
\label{tab:updateft}
\begin{center}
\begin{small}
\begin{tabular}{@{}l|cccc@{}}
\toprule
           & \multicolumn{4}{c}{\# of base model	finetune samples} \\
\multirow{-2}{3cm}{ \# of denoise model finetune samples}  &  1000 & 10000 & 	30000 & 60000 \\ \midrule
0 &  0.621 & 0.612 & 0.589 & 0.575 \\
100 &  0.629 & 0.627 & 0.592 & 0.614 \\
1000 &  0.642 & 0.635 & 0.644 & 0.631 \\
 \bottomrule
\end{tabular}
\end{small}
\end{center}
\end{table}

\subsection{Geometry Properties of Embedding Space}
\label{app:euclidean}
We compute the Euclidean distances for three representative models: Bert Base, GPT2 Medium, and T5 Small. We evaluate the privacy budget with $\eta=500$ for GPT and Bert models, and $\eta=5$ and $\eta=10$ for T5 model in Figure \ref{fig:euclidean}. T5 model has much larger Euclidean distances  than the other two models with its neighbors, and thus requires smaller levels of $\eta$ to achieve the similar level of privacy protection.

\begin{figure*}[!htbp]
\begin{center}
 \begin{minipage}{0.3\linewidth}
 \centering
    \includegraphics[width=\linewidth]{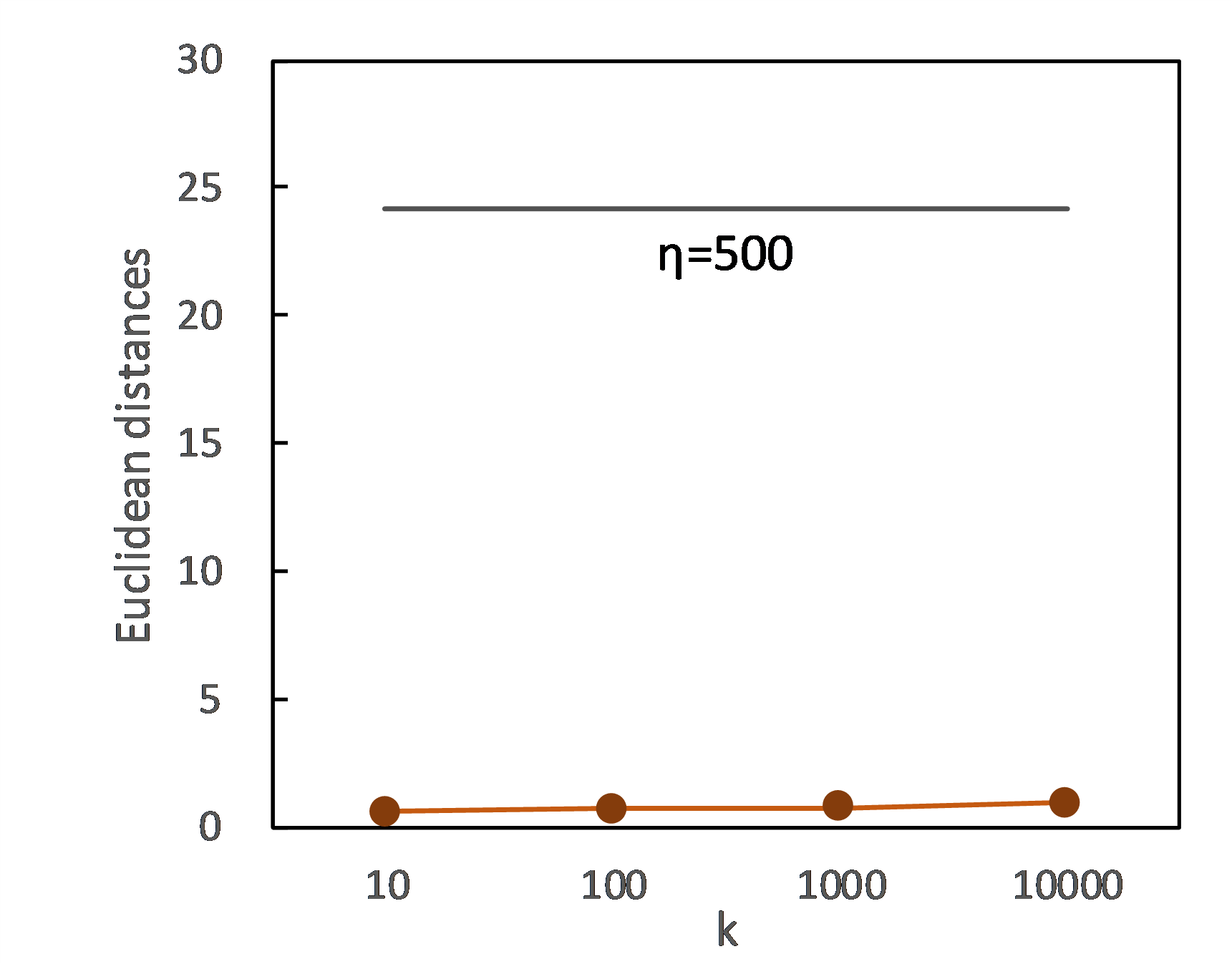}
    \subcaption{Bert Base}\label{bert_euclidean}
\end{minipage}
    \hfill
\begin{minipage}{0.3\linewidth}
\centering
    \includegraphics[width=\linewidth]{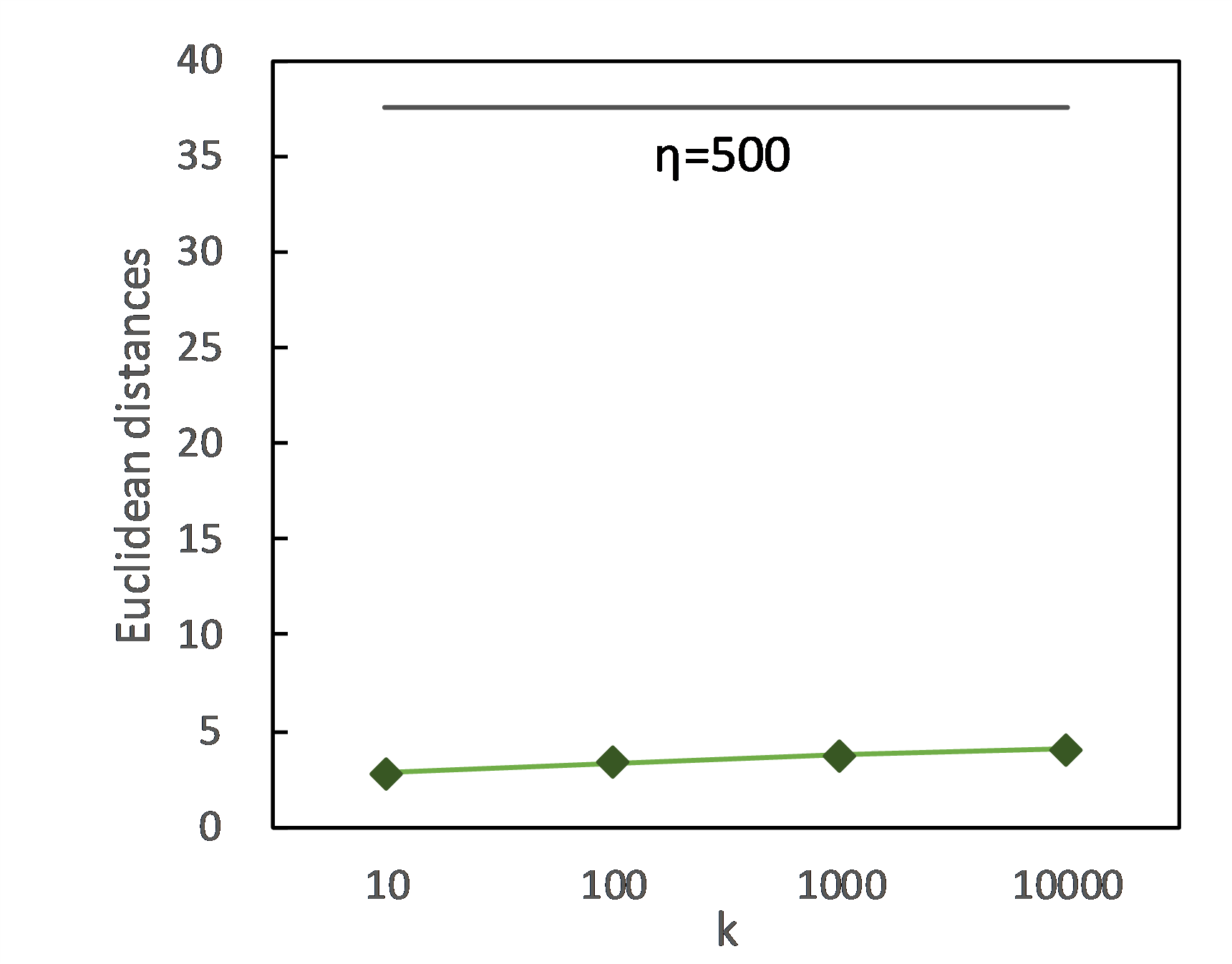}
    \subcaption{GPT2 Medium}\label{gpt_euclidean}
\end{minipage}
     \hfill
\begin{minipage}{0.3\linewidth}
\centering
    \includegraphics[width=\linewidth]{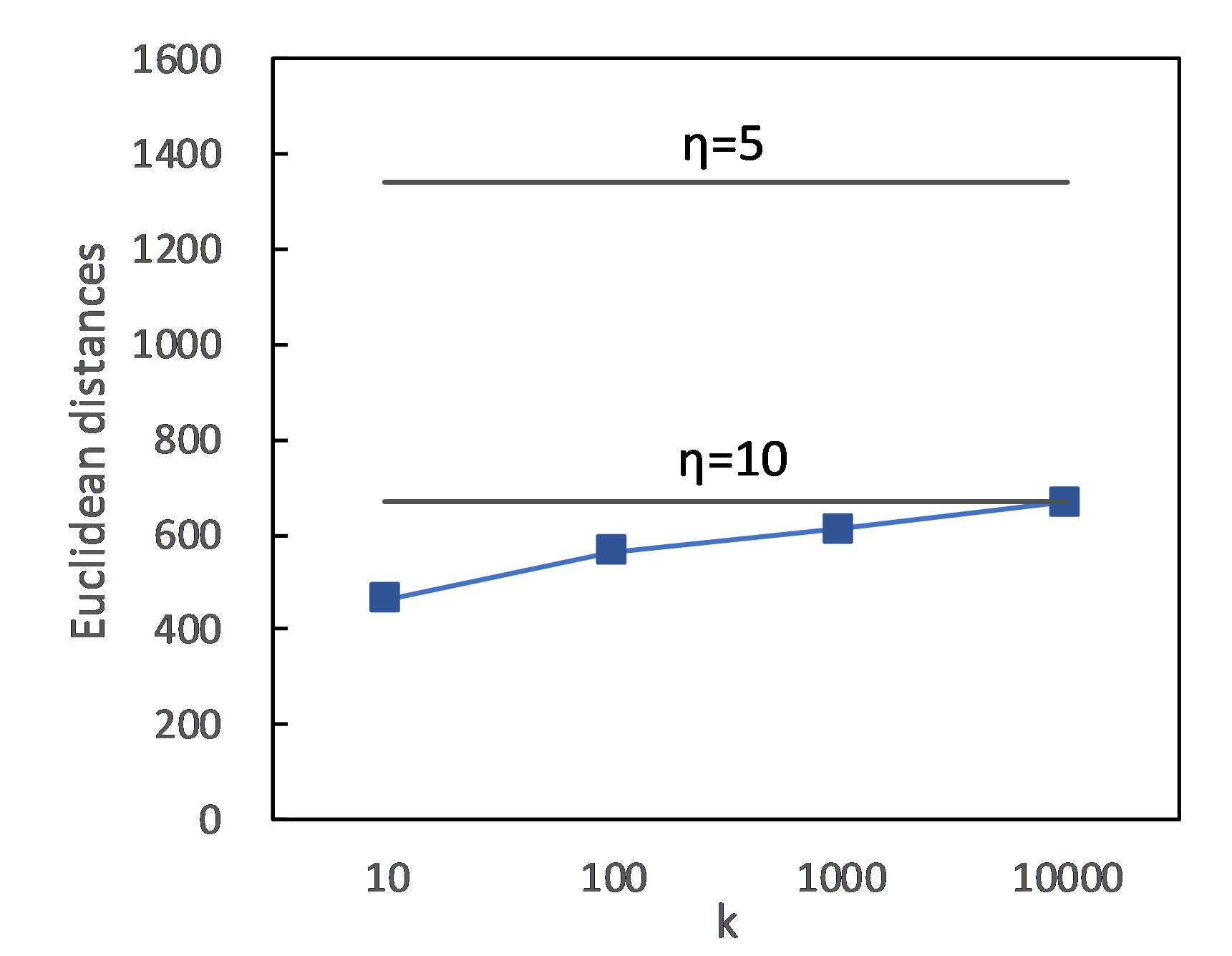}
    \subcaption{T5 Small}\label{t5_euclidean}
\end{minipage}
\caption{Euclidean distances}
\label{fig:euclidean}
\end{center}
\end{figure*}

\subsection{Comparison with Baseline in Terms of Similarity}
\label{app:compsim}
Table \ref{tab:msecompBERT}, \ref{tab:msecompGPT}, and \ref{tab:msecompt5} compare the MSE and COS of SnD with the three baselines. A higher COS (or lower MSE) suggests that the final output embeddings is more similar to the clean output embeddings, indicating a higher preservation of utility. Noted that we don't list the metrics for RAPT as it returns the classification result to the user, which doesn't involve the transmission of output embeddings. One important observation is that our proposed methods results in significantly lower MSE and higher COS compared with the two baselines.
\begin{table}[!htbp]
\caption{MSE and COS comparisons for BERT models with QQP task.}
\label{tab:msecompBERT}
\begin{center}
\begin{small}
\begin{tabular}{@{}lllllllllll@{}}
\toprule
           &        & \multicolumn{3}{c}{DistillBert} & \multicolumn{3}{c}{Bert Base} & \multicolumn{3}{c}{Bert Large} \\ \cmidrule(l){2-10} 
$\eta$         &       & 50    & 100 & 500      & 50    & 100  & 500  & 50    & 100   & 500     \\ \midrule
\multirow{2}{*}{TokenEmbPriv} & MSE & 0.507 &   0.507  &  0.630   &  0.477 &   0.475  & 0.461 & 0.629  &  0.632   &  0.630\\
 & COS & 0.212 &   0.214  &  0.204   &  0.097 &   0.098  &  0.105 & 0.203  &  0.200   &  0.204\\
\multirow{2}{*}{Text2Text}  & MSE &    0.445   &  0.445  & 0.445  & 0.248   & 0.248  &   0.248 &   0.609  &  0.608  &  0.613 \\
  & COS &    0.279   &  0.279  & 0.279  & 0.470   &  0.470  &  0.470 &  0.224  &  0.226  &  0.224 \\
\multirow{2}{*}{\textbf{SnD}} & \textbf{MSE} &    \textbf{0.241}  &   \textbf{0.260}  &  \textbf{0.098}  &  \textbf{0.060}   &  \textbf{0.075}  &   \textbf{0.035}  &    \textbf{0.119}  &   \textbf{0.139}  & \textbf{0.098}\\
 & \textbf{COS} &    \textbf{0.511}  &   \textbf{0.490}  &  \textbf{0.846}  &  \textbf{0.870}   &  \textbf{0.862}  &   \textbf{0.935}  &    \textbf{0.806}  &   \textbf{0.769}  & \textbf{0.846}\\
 \bottomrule
\end{tabular}
\end{small}
\end{center}
\end{table}

\begin{table}[!htbp]
\caption{MSE and COS comparisons for GPT models with MRPC task.}
\label{tab:msecompGPT}
\begin{center}
\begin{small}
\begin{tabular}{@{}lllllllllll@{}}
\toprule
          &         & \multicolumn{3}{c}{GPT2 Small} & \multicolumn{3}{c}{GPT2 Medium} & \multicolumn{3}{c}{GPT2 large} \\ \cmidrule(l){2-10} 
$\eta$         &       & 1    & 50 & 100      & 1    & 50 & 100  & 1    & 50 & 100   \\ \midrule
\multirow{2}{*}{TokenEmbPriv} & MSE & 97.019 &   21.962  &  18.520  &  35.680 &   32.189  & 31.463 & 2.584  &  1.920   &  1.608\\
 & COS & 0.353 &   0.947 &  0.954   &  0.370 &   0.646  &  0.656 & 0.017  &  0.096   &  0.102\\
\multirow{2}{*}{Text2Text}  & MSE &    18.791   &  17.824  & 17.824 & 28.613   & 28.440  &   28.440 &   1.489  &  1.437  &  1.247 \\
  & COS &    0.951   &  0.954  & 0.954  & 0.613   &  0.628  &  0.628 &  0.093  &  0.107  &  0.134 \\
\multirow{2}{*}{\textbf{SnD}} & \textbf{MSE} &    \textbf{4.667}  &   \textbf{4.611}  &  \textbf{4.177}  &  \textbf{11.721}   &  \textbf{10.333}  &   \textbf{11.951}  &    \textbf{0.502}  &   \textbf{0.501}  & \textbf{0.484}\\
 & \textbf{COS} &    \textbf{0.971}  &   \textbf{0.985}  &  \textbf{0.989}  &  \textbf{0.838}   &  \textbf{0.870}  &   \textbf{0.890}  &    \textbf{0.630}  &   \textbf{0.609}  & \textbf{0.611}\\
 \bottomrule
\end{tabular}
\end{small}
\end{center}
\end{table}

\begin{table}[!htbp]
\caption{MSE and COS comparisons for T5 models with MRPC task.}
\label{tab:msecompt5}
\begin{center}
\begin{small}
\begin{tabular}{@{}lllllllllll@{}}
\toprule
          &         & \multicolumn{3}{c}{T5 Small} & \multicolumn{3}{c}{T5 Base} & \multicolumn{3}{c}{T5 large} \\ \cmidrule(l){2-10} 
$\eta$         &       & 0.001    & 0.01 &0.1      & 0.001    & 0.01 &0.1  & 0.001    & 0.01 &0.1   \\ \midrule
\multirow{2}{*}{TokenEmbPriv} & MSE & 0.630 &   0.234  &  0.201  &  0.230 &   0.131  & 0.093 & 0.212  &  0.120   &  0.098\\
 & COS & 0.909 &   0.923 &  0.962   &  0.873 &  0.902  &  0.973 & 0.697  &  0.934   &  0.957\\
\multirow{2}{*}{Text2Text}  & MSE &    0.086   &  0.084  & 0.089 & 0.135   & 0.133  &   0.134 &   0.072  &  0.073  &  0.070 \\
  & COS &    0.923   &  0.924  & 0.923  & 0.826   &  0.825  &  0.826 &  0.834  &  0.837  &  0.837 \\
\multirow{2}{*}{\textbf{SnD}} & \textbf{MSE} &    \textbf{0.035}  &   \textbf{0.038}  &  \textbf{0.007}  &  \textbf{0.004}  &   \textbf{0.006}  &  \textbf{0.003}   &    \textbf{0.022}  &   \textbf{0.021}  & \textbf{0.005}\\
 & \textbf{COS} &    \textbf{0.988}  &   \textbf{0.992}  &  \textbf{0.997}  &  \textbf{0.991}   &  \textbf{0.996}  &   \textbf{0.992}  &    \textbf{0.961}  &   \textbf{0.966}  & \textbf{0.978}\\
 \bottomrule
\end{tabular}
\end{small}
\end{center}
\end{table}

\subsection{Overhead Analysis}
\label{app:compcost}
In this section, we evaluate the overhead on a virtual server with Intel Xeon Platinum 8336C CPU and NVIDIA RTX A6000 GPU (CUDA version 12.2).

\begin{table}[!htbp]
\caption{Overhead of SnD and encryption-based methods. \textit{Comp.} and \textit{Comm.} represent the computation and communication cost respectively. The computation costs are measured in seconds. }
\label{tab:comHE}
\begin{center}
\begin{tabular}{@{}lcccccc@{}}
\toprule
          &  \multicolumn{2}{c}{SnD}  & \multicolumn{2}{c}{PCFT} & \multicolumn{2}{c}{Iron} \\
          &  Comp. &  Comm. (MB) &  Comp. & Comm. (GB)&  Comp. & Comm. (GB) \\
          \midrule
 DistillBert &  0.026 & 0.00014 & 137.16 & 13.68 & 693.18& 76.56\\
 Bert Base & 0.031 & 0.00014 & 420.12 & 5.7 &  2021.16 & 27.06\\
 \bottomrule
\end{tabular}
\end{center}
\end{table}

To verify the practicability of SnD, we benchmark our framework with two encryption-based methods, Privacy-Computing Friendly Transformers (PCFT) \cite{liu2023llms} and Iron \cite{hao2022iron}. Table \ref{tab:comHE} presents the computation cost for the inference of one sample, where the token length is set as 128. The communication cost in SnD doesn't involve downloading of denoise model as this is a one-time cost. The state-of-art encryption approaches incurred significant overhead in terms of communication cost resulted from their multiparty computation protocols. The SnD results in more than $5000\times$ speedup for PCFT and $25000\times$ speedup for Iron.

\begin{figure}[!htbp]
\begin{center}
 \begin{minipage}{0.48\linewidth}
 \centering
    \includegraphics[width=\linewidth]{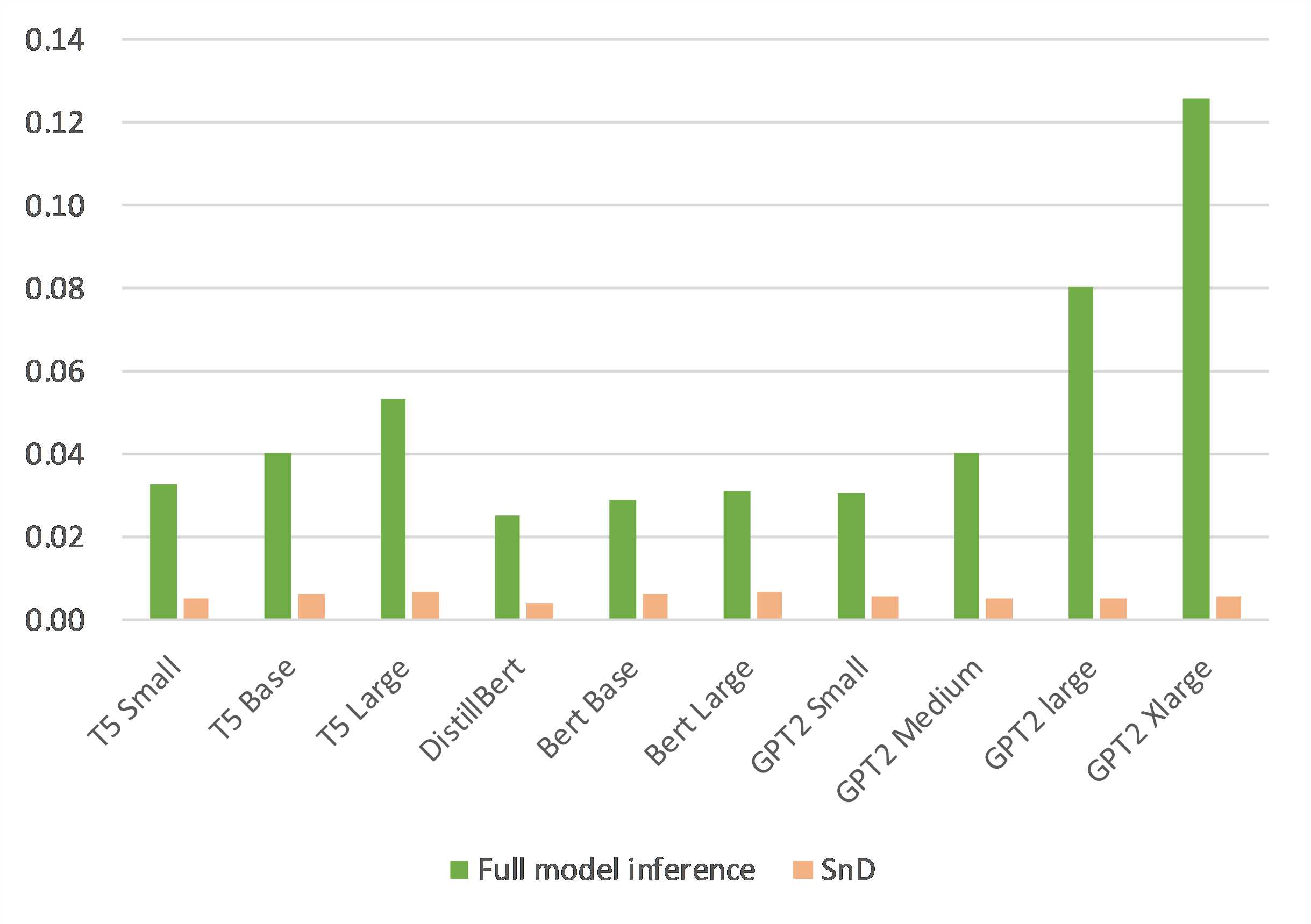}
    \subcaption{Computation Time (Seconds)}\label{comcost}
\end{minipage}
    \hfill
\begin{minipage}{0.48\linewidth}
\centering
    \includegraphics[width=\linewidth]{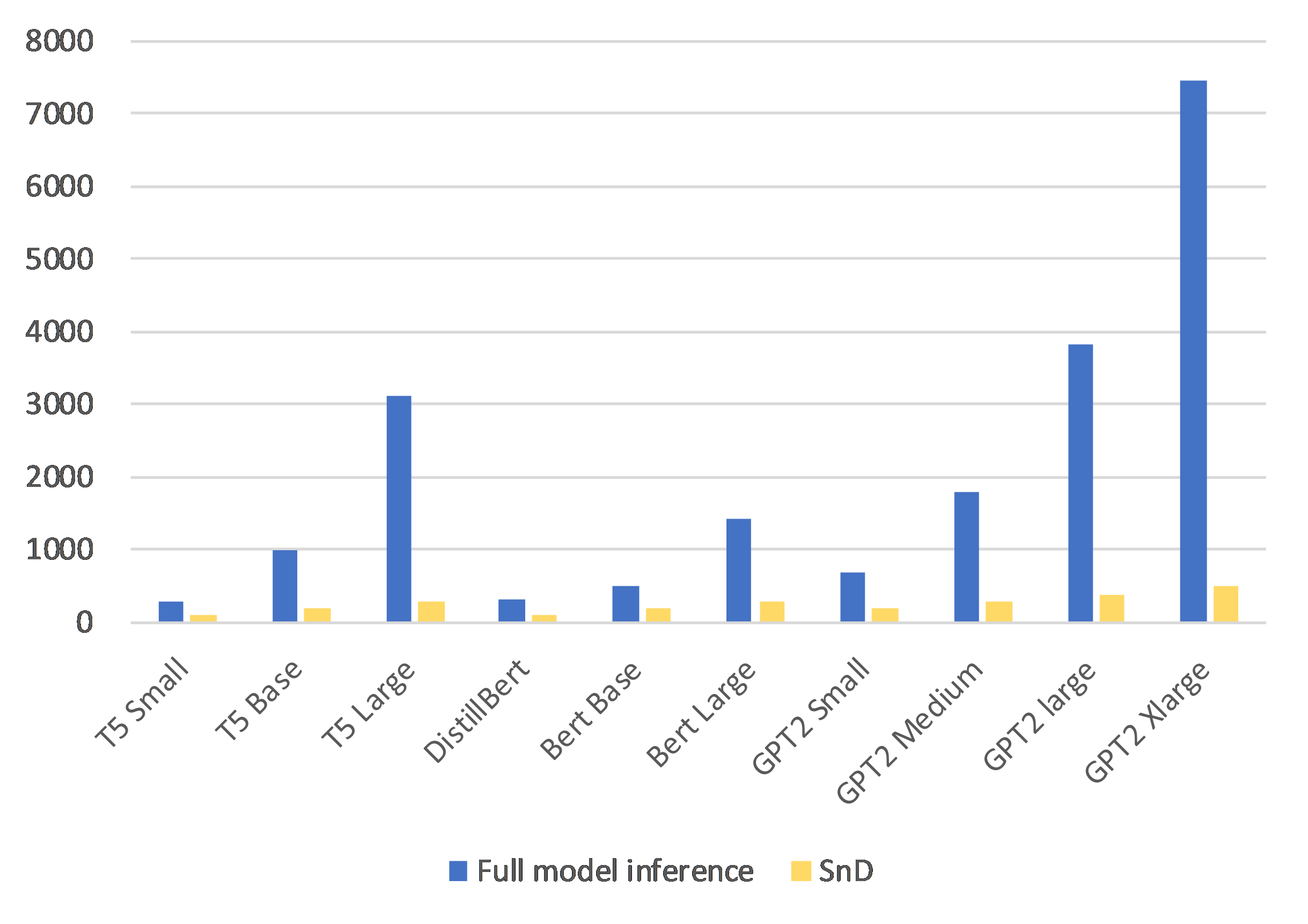}
    \subcaption{Memory (MB)}\label{memory}
\end{minipage}
\caption{Computation and memory cost on user side. \textit{Full model inference} denotes the case where user runs the whole language model, and \textit{SnD} denotes the user's computation cost in our proposed method.}
\label{fig:overhead}
\end{center}
\end{figure}

In Figure \ref{fig:overhead} we compare the computation and memory cost of user-side inference in two cases: (a) user only conducts initial representation retrievement and local denoising in SnD, and (b) user performs local inference with the whole language model. It can be observed that SnD has significant advantages in terms of the overhead, and the computation benefits are greater for language models of large sizes.  In particular, SnD saves the user's computation and memory cost by 95.3\% and 93.5\%, respectively, compared with full model inference for GPT2-Xlarge.

\subsection{Ablation Studies}
\label{app:ablation}
In this section, we conduct ablation studies to investigate the impact of user-side denoise model and norm clipping on privatized token embeddings.

\begin{figure}[!htbp]
\begin{center}
 \begin{minipage}{0.3\linewidth}
 \centering
    \includegraphics[width=\linewidth]{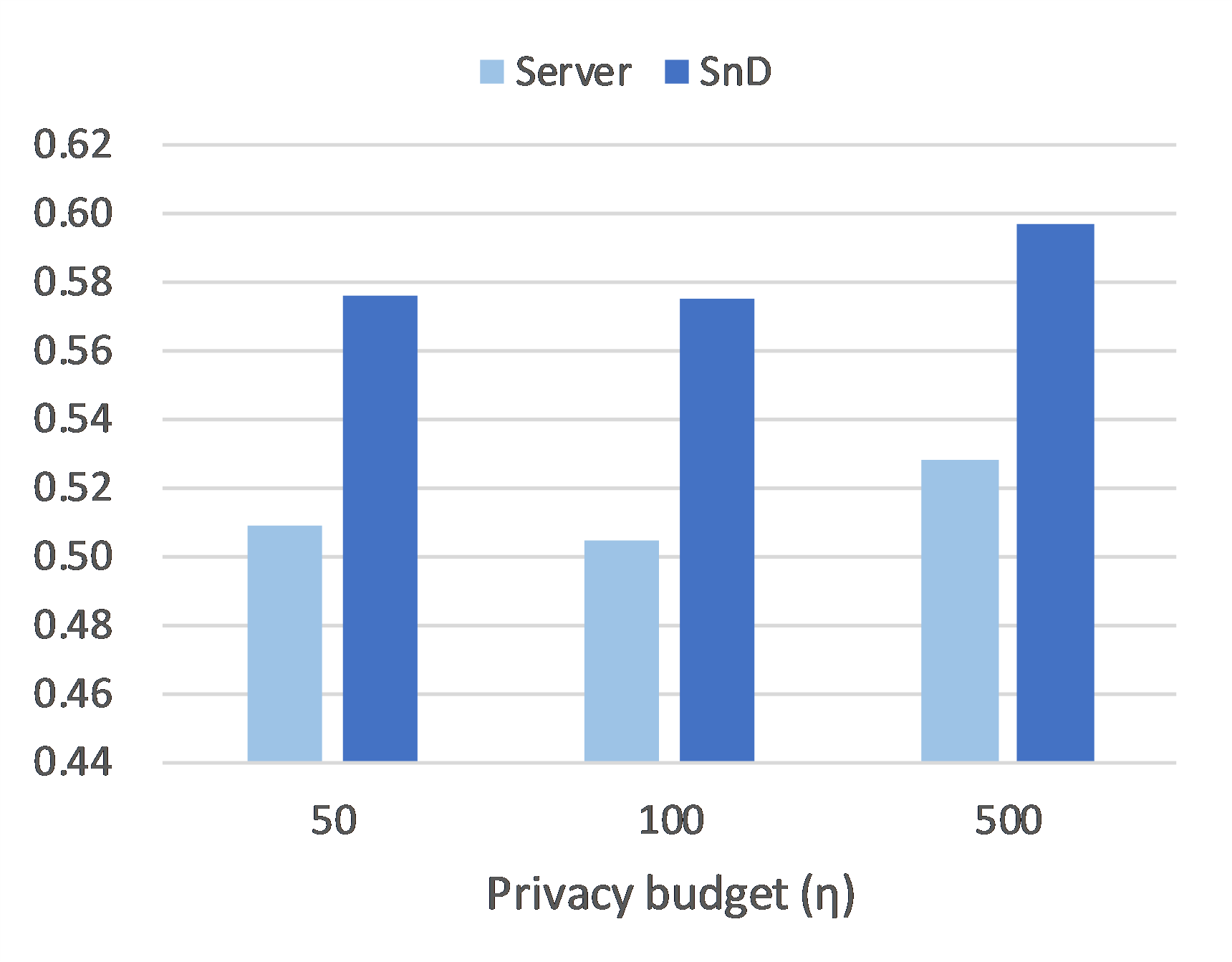}
    \subcaption{DistillBert}\label{distillbert_server}
\end{minipage}
    \hfill
\begin{minipage}{0.3\linewidth}
\centering
    \includegraphics[width=\linewidth]{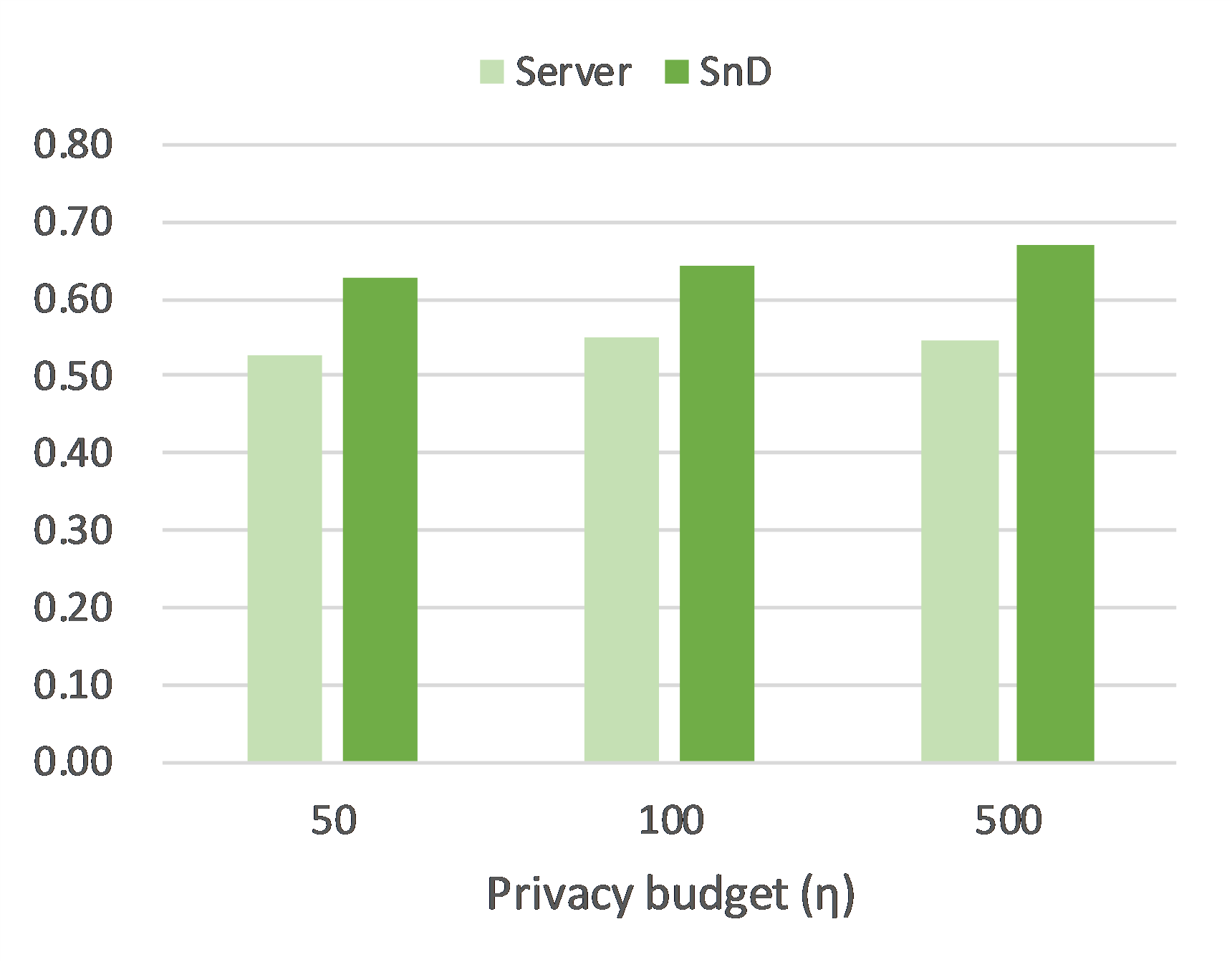}
    \subcaption{Bert Base}\label{bertbase_server}
\end{minipage}
     \hfill
\begin{minipage}{0.3\linewidth}
\centering
    \includegraphics[width=\linewidth]{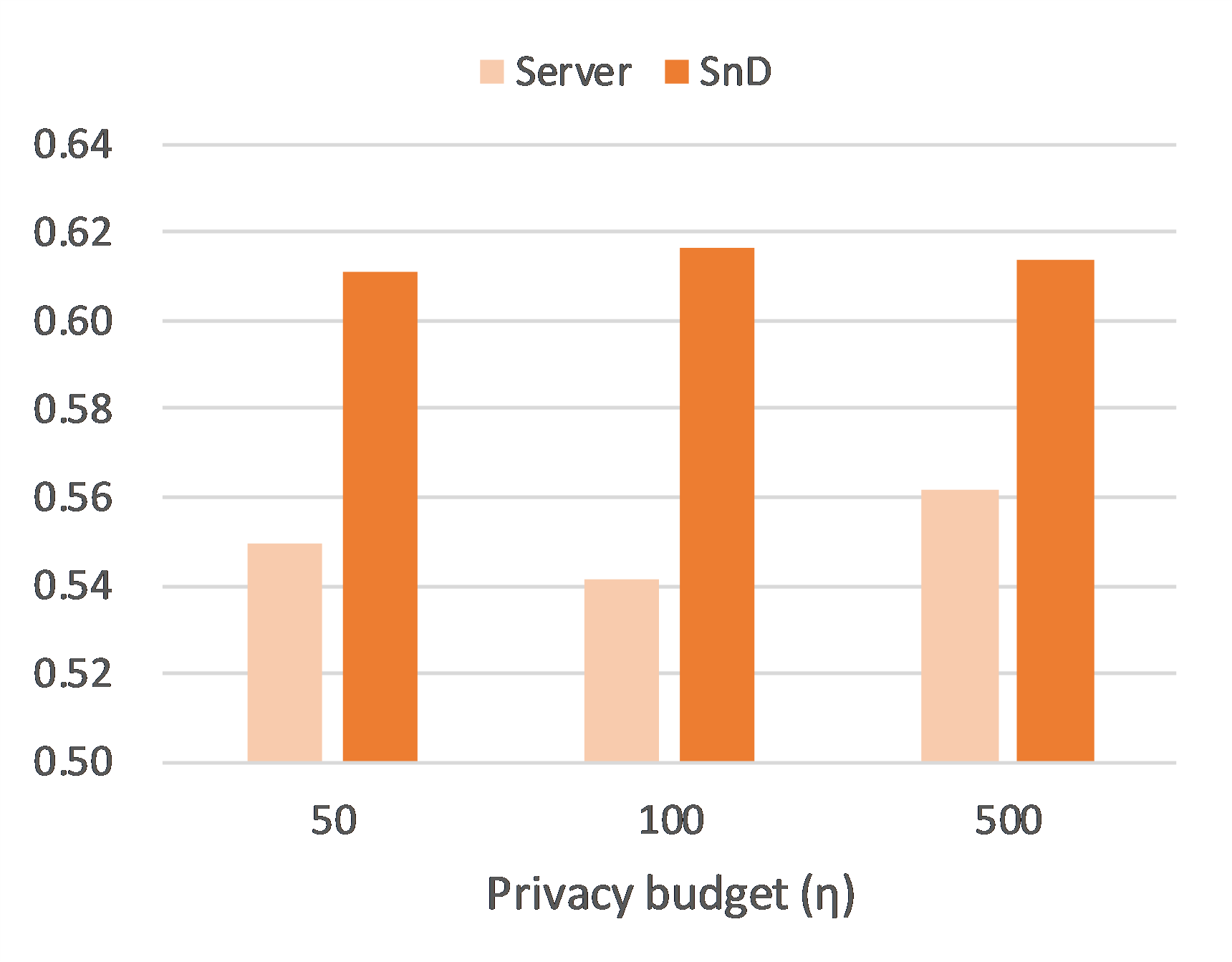}
    \subcaption{Bert Large}\label{bertlarge_server}
\end{minipage}
\caption{AUC comparison on denoise model deployment for BERT models with MRPC task. \textit{Server} denotes that the denoise model is implemented at the server side without the knowledge of noise levels.}
\label{fig:bertserver}
\end{center}
\end{figure}

\textbf{Impact of server-side denoise model:} to evaluate the impact of noise level awareness on denoising performance, we deploy the denoise model on server side using only privatized token representations as input. We train a transformer-based denoise model that outputs the denoised token representations with the objective to minimize MSE. The denoise model adopts the same hyperparameters specified in Appendix \ref{app:hyperparam}. Figure \ref{fig:bertserver} demonstrates that SnD significantly outperforms the server-side denoise method by over 10\% in almost all cases. It is also important to note that most AUCs of server-side denoise model fall below 0.55, suggesting the server's incapacity to deduce private information about the user.

\textbf{Impact of norm clipping:} we perform ablation studies on the norm clipping procedure for the privatized token embeddings. Figure \ref{fig:t5clip} shows the AUC comparisons for three downstream tasks on T5 large model. It can be observed that clipping the privatized inputs improve the accuracy by an average of 7.5\%, 15.5\%, 13.4\% for RTE, MRPC, and QQP tasks respectively.

\begin{figure}[!htbp]
\begin{center}
 \begin{minipage}{0.3\linewidth}
 \centering
    \includegraphics[width=\linewidth]{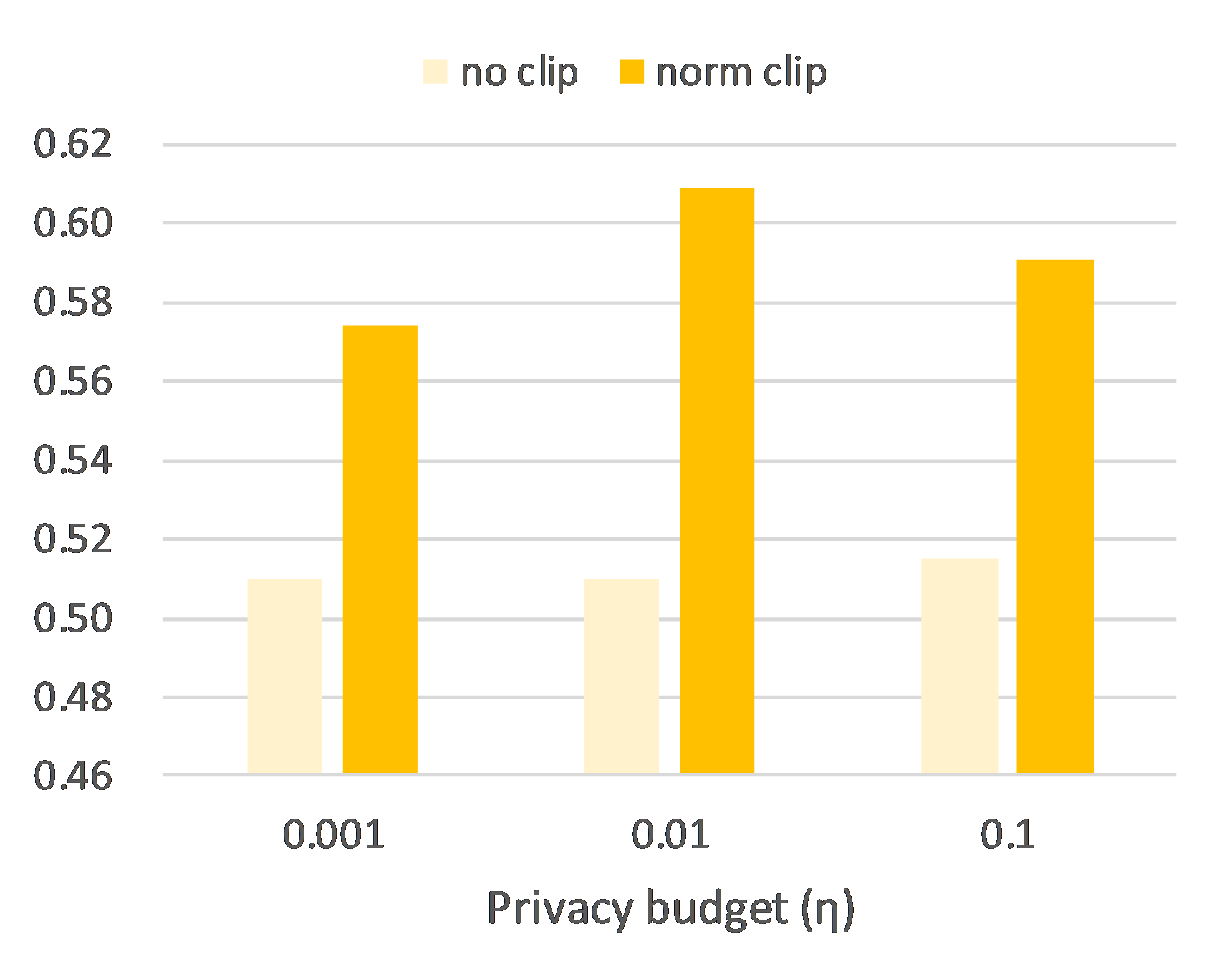}
    \subcaption{MRPC}\label{mrpc_clip}
\end{minipage}
    \hfill
\begin{minipage}{0.3\linewidth}
\centering
    \includegraphics[width=\linewidth]{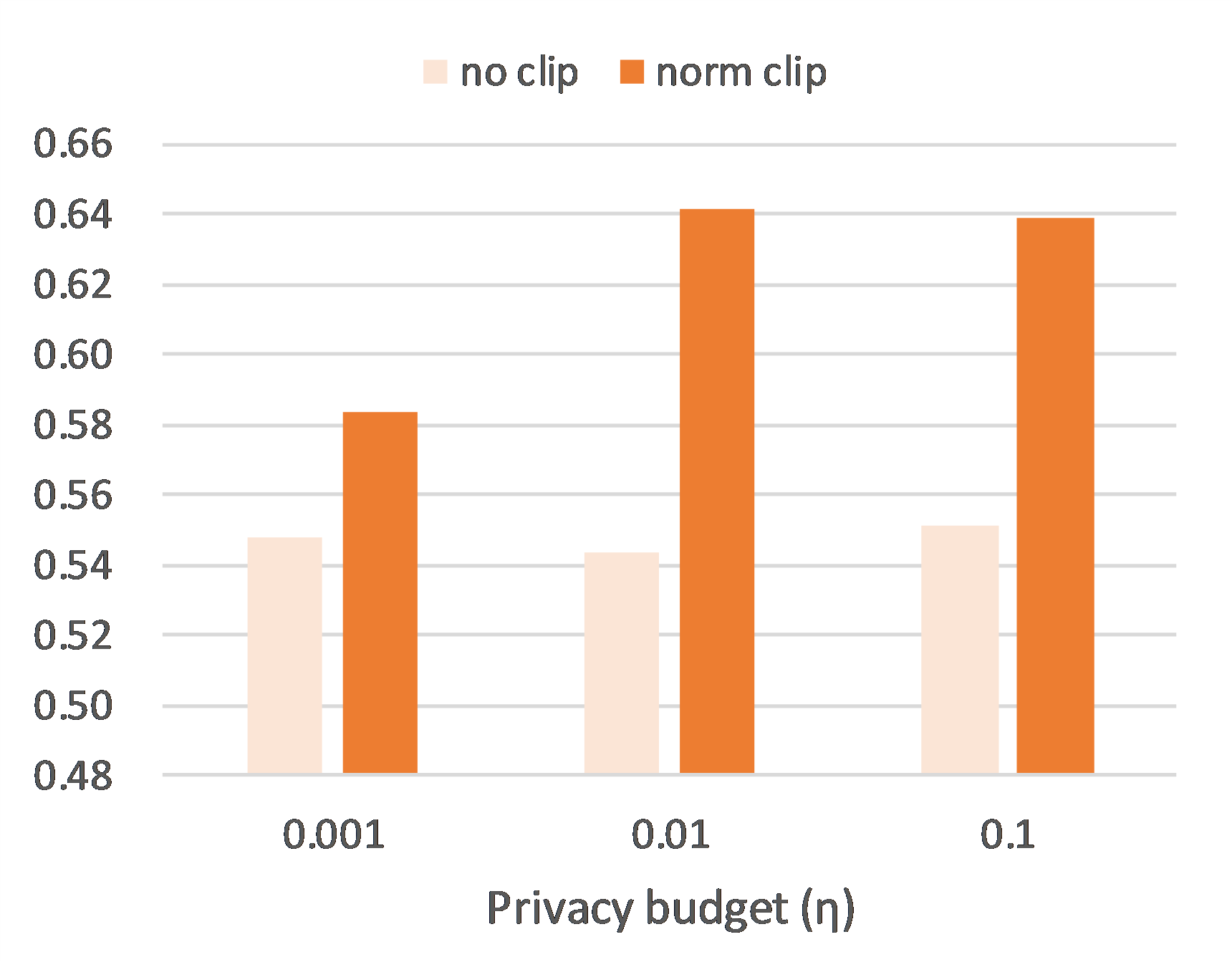}
    \subcaption{QQP}\label{qqp_clip}
\end{minipage}
     \hfill
\begin{minipage}{0.3\linewidth}
\centering
    \includegraphics[width=\linewidth]{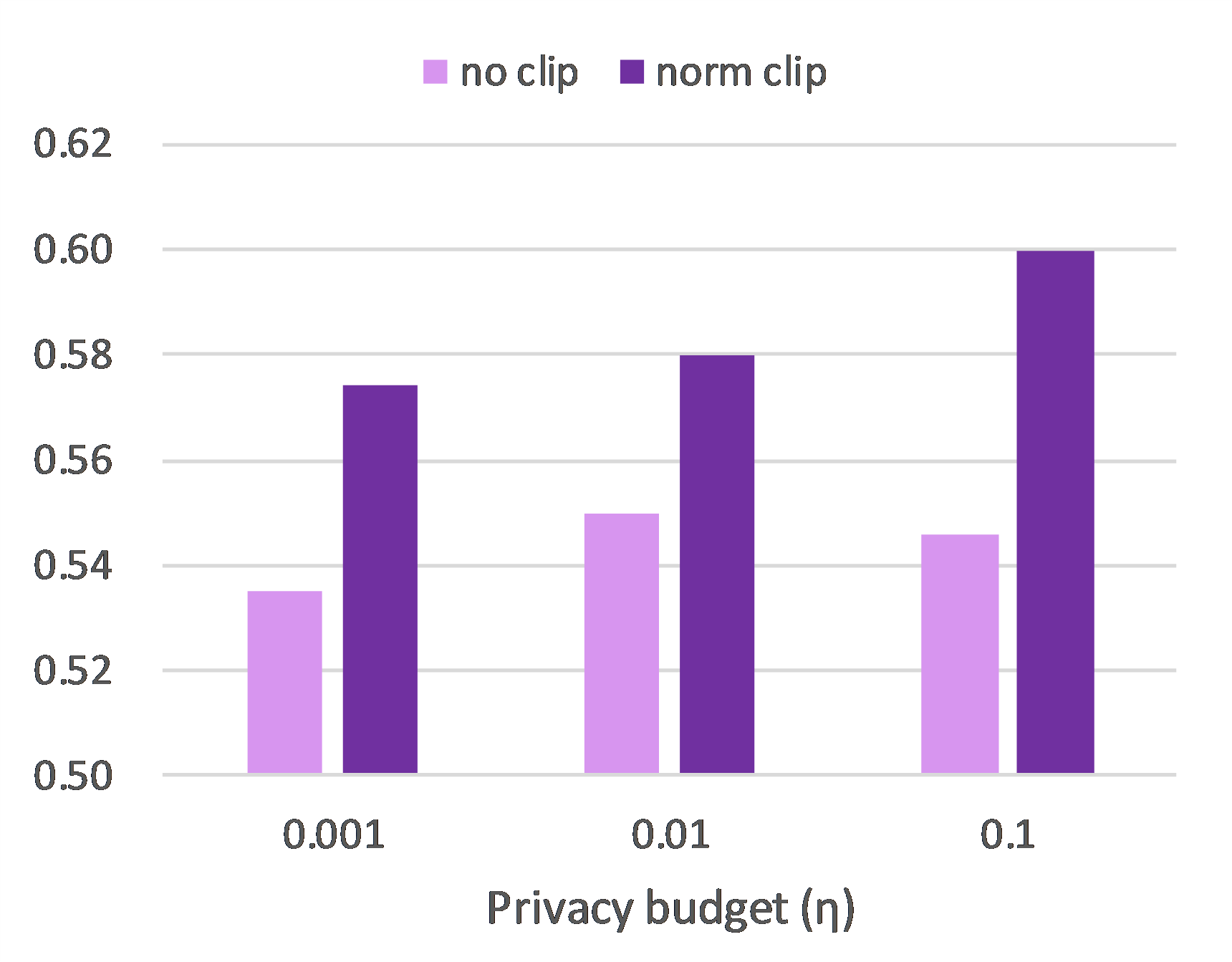}
    \subcaption{RTE}\label{rte_clip}
\end{minipage}
\caption{AUC on T5 Large with and without norm clipping. \textit{No clip} refers to the case where norm clipping is not performed on the token representations.}
\label{fig:t5clip}
\end{center}
\end{figure}

\begin{table}[!htbp]
\caption{Corpus similarity between downstream task and three representative dataset.}
\label{tab:corpussim}
\begin{center}
\begin{tabular}{@{}lccc@{}}
\toprule
          &  CoLA  & MRPC & RTE \\
          \midrule
Poem Sentiment & 0.493 & 0.361 & 0.369 \\
AG News & 0.474	 & 0.868 & 0.853 \\
Rotten Tomatoes & 0.547 & 0.513 & 0.517 \\
 \bottomrule
\end{tabular}
\end{center}
\end{table}

\begin{table}[!htbp]
\caption{AUC with denoise model trained on various public datasets. Mix denotes the performance on our SnD framework on the mixed public dataset.}
\label{tab:publicper}
\begin{center}
\begin{tabular}{@{}llccccc@{}}
\toprule
     & &  Mix & TokenEmbPriv	& 	Poem Sentiment & AG News & Rotten Tomatoes  \\ \midrule
\multirow{3}{*}{CoLA} & BERT-base ($\eta=100$) & 0.56   &  0.50  & 0.55   & \textbf{0.56}  & 0.54 \\
 & GPT2-medium ($\eta=50$) & \textbf{0.56}  &  0.51  & 0.53  & 0.53  & 0.53  \\
  & T5-base ($\eta=0.1$) & \textbf{0.55}  &  0.51  & 0.52  & 0.54  & 0.53  \\
  \hline
\multirow{3}{*}{MRPC} & GPT2-medium ($\eta=100$) & 0.64	  &  0.55  & 0.60 & \textbf{0.66}  & 0.61 \\
 & GPT2-medium ($\eta=50$) & \textbf{0.58}  &  0.52  & 0.53  & 0.55 & 0.54  \\
  & T5-base ($\eta=0.1$) & \textbf{0.57}  &  0.51  & 	0.54  & 0.57  & 0.56  \\
  \hline
\multirow{3}{*}{RTE} & GPT2-medium ($\eta=100$) & 0.60  &  0.53  & 0.56 & \textbf{0.62}  & 0.57 \\
 & GPT2-medium ($\eta=50$) & \textbf{0.57}  &  0.53  & 0.56  & 0.56 & 0.56  \\
  & T5-base ($\eta=0.1$) & 	\textbf{0.61}  &  0.54  & 	0.54  & 0.57  & 0.57  \\
 \bottomrule
\end{tabular}
\end{center}
\end{table}

\subsection{Performance on Different Public Datasets}
\label{app:puclicper}
In this section, we investigate how discrepancies between private and public datasets affect downstream performance. To quantify these differences, we adopt the method in \cite{kilgarriff2001comparing, dunn2021representations} to measure the corpus similarity between the downstream task and public dataset, which measures corpus similarity using Spearman’s rho between frequency ranks across 5,000 bag-of-words features. Specifically, we choose three representative datasets with varying degrees of similarity to the downstream tasks, as detailed in Table \ref{tab:corpussim}.

We present the results on the three representative public datasets in terms of AUC in Table \ref{tab:publicper}. We can make the following observations from the results: (1) Overall, the downstream tasks show better performance when the denoiser is trained with public dataset of higher corpus similarity. For instance, MRPC task has higher AUC with AG News, especially for BERT and T5 models. (2) When the public and private dataset have diverse distribution, SnD show lower AUC, but the performance is relatively robust compared with the baseline method.

\subsection{Extreme Levels of $\eta$}
In this section, we show that our denoise mechanism allows the model to maintain the performance even under extremely low levels of privacy budget $\eta$. Table \ref{tab:lowetabertbase} presents the AUC for Bert base with $\eta$ set as 0.001, 0.01, 0.1. The correlation coefficients between the privated and clean token representations are below 0.005, indicating that the transmitted intermediate values reveal little information about the input text. It can be observed that SnD still outperforms Text2Text by large under the low privacy settings.

\begin{table}[!htbp]
\caption{AUC for Bert Base with $\eta$ from 0.001 to 0.1.}
\label{tab:lowetabertbase}
\begin{center}
\begin{tabular}{@{}llllllllll@{}}
\toprule
                   & \multicolumn{3}{c}{MRPC} & \multicolumn{3}{c}{RTE} & \multicolumn{3}{c}{QQP} \\ \cmidrule(l){2-10} 
$\eta$                 & 0.001    & 0.01 & 0.1      & 0.001    & 0.01 & 0.1  & 0.001    & 0.01 & 0.1     \\ \midrule
Text2Text   &    0.525   &  0.528  & 0.520   & 0.519   & 0.520  &  0.526  &   0.510  &  0.516  &  0.527  \\
\textbf{SnD} &    \textbf{0.617}  &   \textbf{0.616}  &  \textbf{0.619}  &  \textbf{0.576}   &  \textbf{0.578}  &   \textbf{0.569}  &    \textbf{0.639}  &   \textbf{0.650}  & \textbf{0.647}\\
 \bottomrule
\end{tabular}
\end{center}
\end{table}

\end{document}